\documentclass{article}

\usepackage[english]{babel}

\usepackage[letterpaper,top=2cm,bottom=2cm,left=3cm,right=3cm,marginparwidth=1.75cm]{geometry}

\usepackage{amsmath}
\usepackage{graphicx}
\usepackage[colorlinks=true, allcolors=blue]{hyperref}

\usepackage{microtype}
\usepackage{graphicx}
\usepackage{subcaption}
\usepackage{booktabs} 
\usepackage{algorithm}
\usepackage{algorithmic}

\usepackage{hyperref}




\usepackage{amsmath}
\usepackage{amssymb}
\usepackage{mathtools}
\usepackage{amsthm}
\usepackage{dsfont}

\DeclareMathOperator*{\argmax}{arg\,max}

\usepackage{dsfont}
\usepackage{xcolor}

\usepackage{caption}
\captionsetup[figure]{font=small} 

\usepackage[capitalize,noabbrev]{cleveref}

\theoremstyle{plain}
\newtheorem{theorem}{Theorem}[section]
\newtheorem{proposition}[theorem]{Proposition}
\newtheorem{lemma}[theorem]{Lemma}

\theoremstyle{definition}
\newtheorem{definition}[theorem]{Definition}

\theoremstyle{remark}

\usepackage[textsize=tiny]{todonotes}

\usepackage{natbib}  
\usepackage{caption} 

\usepackage{multirow}

\title{Rethinking Divisive Hierarchical Clustering from \\ a Distributional Perspective}
\author{Kaifeng Zhang, Kai Ming Ting, Tianrun Liang, Qiuran Zhao}

\begin{document}
\maketitle

\begin{abstract}
We uncover that current objective-based Divisive Hierarchical Clustering (DHC) methods produce a dendrogram that does not have three desired properties i.e., no unwarranted splitting, group similar clusters into a same subset, ground-truth correspondence. This shortcoming has their root cause in using a set-oriented bisecting assessment criterion. We show that this shortcoming can be addressed by using a distributional kernel, instead of the set-oriented criterion; and the resultant clusters achieve a new distribution-oriented objective to maximize the total similarity of
all clusters (TSC). 
Our theoretical analysis shows that the resultant dendrogram guarantees a lower bound of TSC. The empirical evaluation shows the effectiveness of our proposed method on artificial and Spatial Transcriptomics (bioinformatics) datasets. Our proposed method successfully creates a dendrogram that is consistent with the biological regions in a Spatial Transcriptomics dataset, whereas other contenders fail.
\end{abstract}

\section{Introduction}
Hierarchical clustering (HC) seeks to explore the relationships between clusters at various levels of granularity. HC is extensively applied \cite{li2019community,tumminelloCorrelationHierarchiesNetworks2010,pang2022hierarchical,sangaiah2022hierarchical,diezpalacioNovelBrainPartition2015,fu2021hierarchical} due to its hierarchical structure named dendrogram, which uncovers the pattern of clusters and their subclusters in a given dataset.

HC is categorized into two types: Agglomerative Hierarchical Clustering (AHC) and Divisive Hierarchical Clustering (DHC)~\cite{cohen2019hierarchical}. AHC works in a bottom-up manner. It initially groups the closest pair of points according to a linkage function and progressively merges the two nearest subclusters until all the points are in one cluster. DHC works in a top-down manner. It starts with all points in one cluster and successively partitions them into smaller clusters according to some bisecting assessment criterion.

The core problem in objective-based DHC in building a dendrogram $T$ is how to bisect a set of data points into two subsets at each internal node of $T$. By \emph{treating each cluster as a set of points}, it requires a set-oriented bisecting assessment criterion to perform each bisecting. We show here that an optimized dendrogram, though meeting the stated objective of a DHC method, can be a poor dendrogram that does not possess certain desirable properties. We show that this is because the entire process ignores the distributional information in a dataset.


As the root cause is due to the use of a \emph{set-oriented} bisecting assessment criterion, 
we propose an alternative \emph{distribution-oriented} approach where each cluster is treated as independent and identically distributed (i.i.d.) points generated from a distribution (which characterizes the shape, size, and density of the ground-truth cluster). 
The resultant objective-based DHC method is derived from a distributional kernel, and it succeeds in producing a dendrogram with three desired properties (see the next paragraph),
and the clusters produced satisfy a new distribution-oriented objective. 
In addition, we show that this new approach has $O(n)$ time complexity.

The proposed new objective-based DHC has the following unique features: (i) It is the first objective-based DHC, named H-$\mathcal{K}C$, which has the objective function defined based on a distributional kernel $\mathcal{K}$. (ii) The bisecting employs the same distributional kernel as used in the objective function. This is unlike the current objective-based DHC in two aspects. First, our proposed method does not need a set-oriented bisecting assessment criterion. Second, the set-oriented bisecting assessment criterion used in existing DHC methods has no direct relationship to its objective function. (iii) H-$\mathcal{K}C$ runs in linear time. In contrast, the current fastest objective-based DHC has nearly linear time only if a similarity graph is given (which often requires quadratic time to construct). (iv) It is a distribution-oriented algorithm that discovers clusters of arbitrary shapes, sizes and densities. 
(v) The dendrogram produced by H-$\mathcal{K}C$ possesses \textbf{three desired properties: no unwarranted cluster splitting, group similar clusters into a same subset, ground-truth correspondence} (see Section \ref{sec-Desired-Properties} for details), whereas other existing methods fail to do so.

Our contributions are: 
\begin{enumerate}
    \item Identifying the root cause of the shortcoming of existing objective-based DHC: the use of a set-oriented bisecting assessment criterion.
    \item Creating a new distribution-oriented approach to the objective-based DHC problem.
    \item Proposing a new objective-based DHC algorithm using the distribution-oriented approach called H-$\mathcal{K}C$ that produces a dendrogram with three desired properties.
    \item Establishing the theoretical guarantee that H-$\mathcal{K}C$ is an objective-based DHC, and it produces a dendrogram which has lower bounded a global objective  (i.e., the dendrogram is guaranteed to have a certain quality). 
    \item Conducting empirical evaluations comparing different objective-based DHC algorithms on artificial and Spatial Transcriptomics (bioinformatics) datasets.
\end{enumerate}
\section{Related Work}
Early DHC methods produce a dendrogram that does not base on an objective function. Examples of non-objective-based methods are as follows.  Two earliest methods perform bisecting based on seed selection \cite{macnaughton1964dissimilarity,hubert1973monotone}. PDDP \cite{boley1998principal} uses a linear boundary to split the data in one principal direction. DIANA \cite{DIANA-1990} performs bisecting by mimicking the way a political party might split up due to inner conflicts. It separates a point which is least similar to its own cluster as the initial seed of a new cluster, and then it moves points from the old cluster to the new cluster if they are more similar to this new one. 

An early DHC called Bisect-Kmeans \cite{Bisecting-k-means-2000} performs optimization on each split. It is later identified to be an objective-based DHC by using Kmeans to select the best split with the least total sum of squared errors. \citealp{wang2020objective} proves that Bisect-Kmeans approximates the Revenue Objective of a given dataset $X$, which is defined to be $rev_T(X)=\sum_{S\rightarrow{} (S_1,S_2)\in T}\ \sum_{x\in S_1,y\in S_2} rev(x,y)$, where $S\rightarrow{} (S_1,S_2)\in T$ means all the splits in dendrogram $T$, $rev(x,y)=min\{\frac{d(x,y)}{max\{d(x,\rho(S_1),d(y,\rho(S_2)\}},1\}$, $\rho(S)$ is the center of $S$, and $d$ is Euclidean distance. 

Other works on objective-based  hierarchical clustering \cite{dasgupta2016cost,roy2017hierarchical,chatziafratis2018hierarchical,ghoshdastidar2019foundations}  focus on producing a dendrogram $T$ that has the smallest Dasgupta cost function. Given an undirected graph $\mathbf{G}=(V,E,w)$ with $|V|$ vertices, $|E|$ edges connecting two vertices $u, v \in V$, and similarity function $w : V \times V \rightarrow{\mathbb{R}_{\ge 0}}$, the Dasgupta cost function of $T$ produced from $\mathbf{G}$ is given as follows \cite{dasgupta2016cost}: $cost_\mathbf{G}(T) = \sum_{\{u,v\} \in E} w(u,v) \times |LCA_T(u,v)|$,
where $LCA_T(u,v)$ denotes the lowest common ancestor of  $u$ and $v$ in $T$. A variant of Dasgupta cost function is CKMM objective \cite{cohen2017hierarchical,naumov2021objective}, which basically replaces similarity $w(u,v)$ with distance $d(u,v)$; and minimizing Dasgupta cost is replaced with maximizing CKMM.





A recent work SpecWRSC \cite{laenen2023nearly} uses Spectral Clustering to produce a set of clusters and then builds a dendrogram from these clusters. It produces an $\mathcal{O}(1)$-approximate tree of Dasgupta cost function in nearly-linear time relative to the size of the input graph (excluding the cost of building the adjacency matrix of the graph). This is achieved by using the set-oriented bisecting assessment criterion (sparsest-cut) which focuses solely on minimizing the sparsity between two subsets, i.e.,  $Sparsity(Y)=\frac{W(Y,V\setminus Y)}{|Y|\cdot|V\setminus Y|}$, where $W(Y,Z) = \sum_{u \in Y, v \in Z} w(u,v)$. While effective for the global objective, it tends to separate a cluster from the rest at each split (Figure \ref{subfig: demo SpecWRSC}), failing to group similar clusters.

Current methods assess the quality of a split based on either the cohesiveness of each individual subset (e.g., Bisect-Kmeans) or the dissimilarity between the two resulting subsets (e.g., SpecWRSC), i.e. set-oriented bisecting assessment criterion. They can not consider the distributional information inside the subsets.

In contrast, our proposed method uses a distributional kernel to build a dendrogram such that similar clusters are in either subset of each split (i.e., no similar clusters are split into different subsets). Its objective function is the total similarity of all clusters (TSC) in terms of the distributional kernel, which is used for the first time in hierarchical clustering.
This function is similar to that used in an existing flat clustering method called Point-Set Kernel Clustering (psKC) \cite{pskc}, but psKC's objective function is TSC in terms of point-set kernel. The difference between the two kernels and the details of psKC are provided in Appendix \ref{append: ablation}.

\begin{figure*}[t]
\centering
     \begin{subfigure}
        {.24\linewidth}
      \centering   
\includegraphics[width=\linewidth]{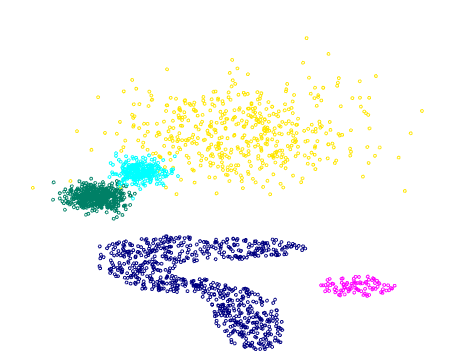}
        \caption{Ground-truth plot}
        \label{subfig:GT}
    \end{subfigure}
     \begin{subfigure}
        {.24\linewidth}
      \centering   
\includegraphics[width=\linewidth]{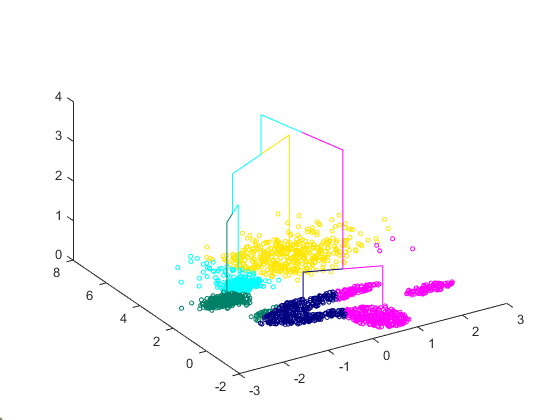}
        \caption{Bisect-Kmeans: $\wp=0.86$}
        \label{subfig: demo Bisect-Kmeans}
    \end{subfigure}
\begin{subfigure}
        {.24\linewidth}
      \centering   
\includegraphics[width=\linewidth]{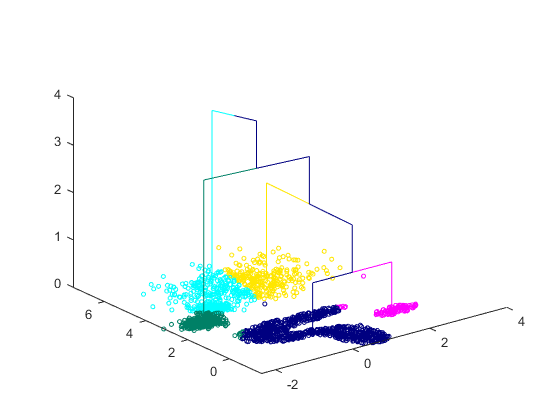}
        \caption{SpecWRSC: $\wp=0.83$}
        \label{subfig: demo SpecWRSC}
    \end{subfigure} 
   \begin{subfigure}
        {.24\linewidth}
      \centering   
\includegraphics[width=\linewidth]{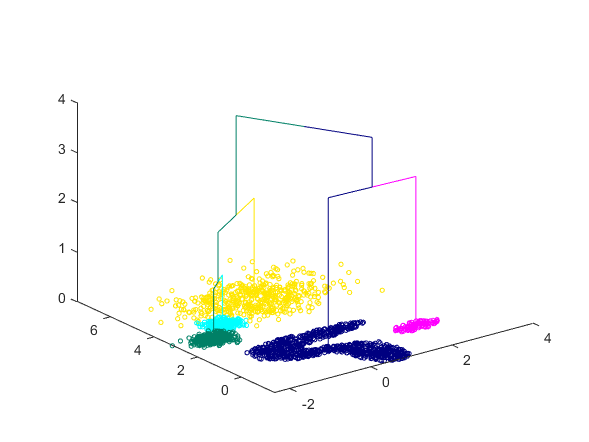}
        \caption{H-$\mathcal{K}C$: $\wp=0.98$}
        \label{subfig: demo HKBC}
    \end{subfigure} 
    \caption{Dendrograms of Bisect-Kmeans, SpecWRSC and H-$\mathcal{K}C$ on the artificial dataset which contains clusters of arbitrary shapes and varied densities. $\wp$ is Dendrogram Purity \protect\cite{heller2005bayesian} (see the details in Appendix \ref{appendix: purity}).}
\label{fig: example}
\end{figure*}



Current evaluation methods on dendrograms have largely focused on minimizing cost and employed the commonly used Dendrogram Purity (DP) \cite{heller2005bayesian,kobren2017hierarchical}. Yet, it  has been revealed that `perfect DP can be achieved when each ground-truth cluster
corresponds to a subtree, regardless of hierarchy on top or
inside of the subtrees' \cite{naumov2021objective}. 
In other words, DP does not reflect the quality of a dendrogram. Hence, in addition to DP, we compare the structures of dendrograms produced by different algorithms through visualization. 


\section{Desired Properties of a Dendrogram}
\label{sec-Desired-Properties}

Compared to flat clustering, a dendrogram provides the nested relationship between a cluster and its subclusters. The desired properties of a good dendrogram are:
\begin{enumerate}
    \item \textbf{No unwarranted splitting}: A cluster should not be split into two sub-clusters in child nodes of a dendrogram if the cluster is not already at one node by itself.
    \item \textbf{Group similar clusters into a same subset}: Every split in a dendrogram shall produce two subsets such that each subset contains clusters which are closer to each other than clusters in the other subset.
    \item \textbf{Ground-truth correspondence}: Each leaf node of a dendrogram corresponds to a single cluster in a given dataset. This hierarchical clustering outcome is equivalent to the best outcome of a flat clustering.
\end{enumerate}

\subsection{Current Methods Fail to Yield a Dendrogram with All Desired Properties Mainly Due to The Set-Oriented Bisecting Assessment Criterion Employed}
\label{sect:distribution}
It is interesting to note that existing objective-based DHC methods produce a dendrogram which does not possess all the three properties, described in the following paragraphs.

\textbf{The first desired property depends heavily on the ability to detect clusters of complex shapes, different data sizes and varied densities. And the third property is a natural consequence of the first one}. Kmeans is known for its inability to find clusters of arbitrary shapes. For example, in a dataset with a L-shaped cluster, shown in Figure \ref{subfig: demo Bisect-Kmeans}, Bisect-Kmeans  splits this cluster into two sub-clusters before it is in one node by itself, where the top left part of the L-shaped cluster is split in the first bisecting. 

SpecWRSC relies on Spectral Clustering \cite{von2007tutorial} for initialization, which lacks the first property due to its known difficulty in detecting clusters of varied densities \cite{SC-Limitations-2006}. As a result, it fails to identify the three Gaussian distributions, incorrectly splitting the sparse right cluster and merging it with the dense middle one (Figure \ref{subfig: demo SpecWRSC}).

\textbf{The second desired property demands that the bisecting considers the similarity of clusters within each subset of the split.} Conceptually, Bisect-Kmeans utilizes a bisecting assessment criterion that does not take into account the similarity of clusters within each subset of the split because it is set-oriented. Bisect-Kmeans chooses a split which minimizes the total sum of squared distance from the two mean vectors of the two subsets. Though it considers the cohesiveness in each subset, it is examined with respect to the mean vector only. Each of the mean vectors is often a poor representative of different clusters in either subsets of the split. 


A key limitation of SpecWRSC is that it yields undesirable dendrograms even with optimal initial clustering (e.g., psKC as shown in Appendix \ref{appendix: SpecWRSC-psKC}). This is inherent to its bisecting assessment criterion, which tends to separate one cluster from the rest at each step, resulting in an unbalanced tree that fails to group similar clusters. This is also reflected in Figure \ref{subfig: demo SpecWRSC} where SpecWRSC produces a highly unbalanced tree which separates one cluster from the rest at each internal node. In addition, 
the yellow cluster is less similar to the green and L-shaped clusters than the light blue cluster. Yet, the first split of SpecWRSC separates the light blue cluster and groups the yellow, green and L-shaped clusters together in one subset.



Both Bisect-Kmeans and SpecWRSC rely on set-oriented bisecting assessment criterion to determine the best split that does not ascertain that (a) similar clusters are grouped into a same subset, and (b) no individual clusters are split in any bisecting before each cluster is in one node by itself. In short, the use of set-oriented bisecting assessment criterion, which does not consider the integrity of each cluster and similarity of clusters within a same subset to determine the split, is a fundamental shortcoming.

To address this fundamental shortcoming, we propose to use a criterion which must preserve  each cluster holistically while building a dendrogram. We show that this can be achieved by treating each cluster as a distribution, and using a distributional kernel to perform the bisecting and maintain the integrity of each cluster. 

It has been shown \cite{smola2007hilbert} that the feature map of a distributional kernel represents a distribution well. By treating each cluster as a distribution, the use of a distributional kernel to perform bisecting is the key to producing a dendrogram with the three desired properties---avoiding the shortcoming of current methods.
\section{Preliminary: Distributional Kernel}
Distributional Kernel $\mathcal{K}$ of Kernel Mean Embedding (KME) \cite{smola2007hilbert} is derived using a symmetric and positive definite point kernel $\kappa$ with feature map $\phi$. It represents a distribution $\mathcal{P}$ as a point in Reproducing Kernel Hilbert Space (RKHS) with its feature map $\Phi(\mathcal{P})=\int_{\mathcal{X}}\kappa(x,\cdot)d\mathcal{P}$, where $\mathcal{X}$ is the support of $\mathcal{P}$.

Given two datasets $X$ and $Y$, sampled from distributions $\mathcal{P}_X$ and $\mathcal{P}_Y$, respectively, the similarity between $\mathcal{P}_X$ and $\mathcal{P}_Y$ can be estimated via KME as $\mathcal{K}(\mathcal{P}_X,\mathcal{P}_Y)=\frac{1}{|X||Y|}\sum_{x\in X,y\in Y}\kappa(x,y) =\left \langle\Phi(X),\Phi(Y)\right \rangle$, where the kernel mean map $\Phi(\mathcal{P}_X)=\frac{1}{|X|}\sum_{x\in X}\phi(x)$.

\section{Set-Oriented and Distribution-Oriented Approaches to Bisecting in DHC}

The two approaches to bisecting to produce a dendrogram $T$ in DHC are defined as follows:
\begin{definition}
Set-oriented approach to bisecting: Given a dataset, each bisecting at an internal node of  $T$ splits the set of data points at this node into two subsets based on a bisecting assessment criterion which guides the split by assessing the
quality of the split as the dissimilarity between the two subsets (e.g., SpecWRSC) or the cohesiveness of each subset (e.g., Bisect-Kmeans) after each split.
\label{def-point-oriented}
\end{definition}


\begin{definition}
Distribution-oriented approach to bisecting: Given a dataset, identify a set $C$ of  $k$ core clusters in the first step, and construct a dendrogram $T$ with $k$ leaf nodes from $C$ in the second step.   Each bisecting at an internal node of  $T$ splits the set of core clusters at this node into two subsets based on a distributional kernel which ensures that similar core clusters are grouped into the same subset.
\label{def-distribution-oriented}
\end{definition}

Although the distribution-oriented approach requires an additional step before constructing a dendrogram, the bisecting in the second step is simpler because it is conducted on a set of $k$ core clusters rather than a set of $n$ data points, where $k$ is a small constant and is independent of dataset size $n$.

\begin{definition}
A core cluster consists of core points in a cluster which defines the shape, size, and density of the cluster. 
Non-core or noise points can be eliminated without affecting the cluster being recognized as the same cluster.
\label{def-core-cluster}
\end{definition}

Given Definitions \ref{def-distribution-oriented} \& \ref{def-core-cluster}, we show later that the set of core clusters can be identified relatively easily using a small subset of the given dataset. As a result, it provides the first means to a linear-time algorithm.

It is interesting to note that the distributional kernel in the distribution-oriented approach is not a bisecting assessment criterion through which the splitting tries to optimize. The use of distributional kernel and a heuristic and greedy way to assign core clusters during the split provide the second means to a linear-time algorithm.


\section{Hierarchical Clustering based on Distributional Kernel}
\label{sec-H-KC}

\begin{algorithm}[h]
\caption{H-$\mathcal{K}C$\\Hierarchical Clustering based on distributional kernel $\mathcal{K}$}
\label{alg:HKC}
    \textbf{Input}: $D$ - dataset, $s$ - data subset size, $k$ - 
 target number of leaf nodes in the
dendrogram, $\theta$ - hyperparameters of clustering algorithm $\mathcal{A}$
\\
    \textbf{Output}: Dendrogram $T$\\
\begin{algorithmic}[1] 
\vspace{-4mm}
\STATE Apply $\mathcal{A}(D_s,k,\theta)$ on subset $D_s \subset D, |D_s|=s$, to obtain core clusters \mbox{\small$C=\{G_1,G_2,..G_k\}$}.
\STATE Initialize a binary tree $T$ with root node as $C$.
\STATE Initialize the set of leaf nodes $\mathds{C}=\{C\}$.
\WHILE{$|\mathds{C}|< k$}
\FOR{every leaf node $\hat{C}$ in $\mathds{C}$}
\IF{the number of core clusters in $\hat{C}$ is more than $1$}
\STATE \textcolor{blue}{Split $\hat{C}$ into subclusters $C_{1}$ and $
 C_{2}$}:\\
 $C_j=\{G \in \hat{C} \mid \displaystyle \argmax_{i \in [1,2]} \mathcal{K}(\mathcal{P}_G, \mathcal{P}_{\hat{G}_i}) = j \},$\\$\forall_{j\in [1,2]}$, where $\hat{G}_1$ and $\hat{G}_2$ are the two largest core clusters in $\hat{C}$.
 \STATE \textcolor{blue}{Update $\mathds{C}$ by replacing $\hat{C}$ with $C_{1}$ and $C_{2}$ in $\mathds{C}$:} $\mathds{C}=\mathds{C} \cup  C_{1} \cup C_{2} \setminus \hat{C} $.
 \STATE \textcolor{blue}{Expand $T$}: adding two child nodes $C_{1},C_{2}$ to $\hat{C}$.
\ENDIF
\ENDFOR
\ENDWHILE
\STATE $G'_j = \{x\in D \mid \displaystyle \argmax_{i \in [1,k]} \mathcal{K}(\delta (x), \mathcal{P}_{G_i}) =j\},\forall_{j\in [1,k]}$
\STATE $\Delta=\lfloor |D|*0.01 \rfloor$, $A_j=G_j,\forall_{j\in [1,k]}$
\FOR{$t=1:100$ AND \mbox{\small$\sum_j|\{x|x\in G'_j, x\notin A_j\}|\ge \Delta$}}
\STATE $A_j=G'_j,\ \forall_{j\in [1,k]}$
\STATE \mbox{\small $G'_j=\{x\in D \mid \mathop{\argmax}\limits_{i \in [1,k]} \mathcal{K}(\delta (x), \mathcal{P}_{A_i}) =j\},$ $\forall_{j\in [1,k]}$}
\ENDFOR  \hfill $\triangleright$ \textcolor{blue}{Refine point assignment}
\STATE for each core cluster $G_j$ in each node of $T$, $G_j\leftarrow G'_j$. \\$\triangleright$ \textcolor{blue}{This step ensures that $T$ has all the points in $D$, not just the core clusters obtained from $D_s$ in line 1. 
}
\STATE \textbf{Return} $T$.
 \end{algorithmic}
\end{algorithm}
The proposed Hierarchical Clustering based on distributional kernel $\mathcal{K}$ (H-$\mathcal{K}C$) produces a dendrogram $T$ to achieve the objective of maximizing the total similarity of all clusters ($TSC$):
\begin{equation}
    TSC(T) = \sum_{C \in L_T} \sum_{x \in C} \mathcal{K}(\delta(x),\mathcal{P}_C ),
    \label{equ: local obj}
\end{equation}
where $L_T$ represents the set of $k$ leaf nodes of $T$, and $\delta$ denotes a Dirac measure converting a point into a distribution.

The key steps in Algorithm \ref{alg:HKC} are summarized as follows:
\begin{enumerate}
    \item [*] \textbf{Create a set $C$ of $k$ core clusters from $D_s$} (line 1)\\Each element in $D_s$ is sampled from $D$. An existing clustering algorithm $\mathcal{A}$ can be used here, provided it can find clusters of arbitrary shapes, varied sizes and densities. We show that either psKC \cite{pskc} or DBSCAN \cite{DBSCAN_1996} which employs an appropriate kernel can accomplish this task satisfactorily in Appendix \ref{append: ablation}. 
    \item [*] \textbf{Build a dendrgram $T$ from $C$} (lines 4-12).\\Build a dendrgram from the set of core clusters via bisecting until there are $k$ leaf nodes. Each bisecting selects the largest two core clusters\footnote{Using the two largest core clusters as the basis for each split is an intuitive heuristic which assumes that large clusters are more representative than small clusters.} and assigns each other core cluster to one of them based on distributional kernel $\mathcal{K}$. Note that this process work in a greedy way (unlike existing objective-based DHC methods which exactly optimize a bisecting assessment criterion to build a dendrogram).
    \item [*] \textbf{Assign all points in $D$ to core clusters} (line 13).\\ Assign each unassigned point in $D$ to the most similar core cluster $G_j$ to get $G'_j$.
    \item [*] \textbf{Refine point assignment} (lines 14-18).\\Refine the assignment of points to improve the total similarity of all clusters ($TSC$), defined in Equation \ref{equ: local obj}.
    \item [*] \textbf{Finalize nodes in $T$} (line 19).\\Replace core clusters $G_j$ in each node of $T$ with $G'_j$ to ensure that $T$ has all the points in $D$. 
\end{enumerate}

Note that the objectives achieved before and after the refine point assignment step (lines 14-18) have small or no difference. In other words, the refinement often provides tweaks at the edges of clusters in order to find any minor improvement of the objective obtained before.

H-$\mathcal{K}C$ produces  dendrogram $T$ with the above-mentioned three desired properties:
\begin{enumerate}
    \item Once the $k$ core clusters $G_i$ have been identified in line~1 (by algorithm $\mathcal{A}$ which discovers clusters of arbitrary shapes, varied densities and sizes), they remain the same until they are in the $k$ leaf nodes of $T$ (line 12). Every core cluster is not split in any internal node of the dendrogram. So $T$ has the first and third desired properties.
    \item Similar clusters are grouped into the same subset, when split at each internal node of $T$ based on their similarity as measured by the distributional kernel (line 7 in Algorithm \ref{alg:HKC}). Therefore, $T$ has the second desired property.  
\end{enumerate}




\subsection{Dendrogram Created by H-$\mathcal{K}C$ through Local $TSC$ Objective Has a Global Guarantee}

Note that the $TSC$ objective considers leaf nodes only in $T$, and we will refer it as local objective function.

We show that the dendrogram created by H-$\mathcal{K}C$ through the local objective $TSC$ has a global guarantee, i.e., the dendrogram produced by H-$\mathcal{K}C$ has a certain level of quality.

To eliminate the effect of the size of the dataset $D$, we modify the original local objective $TSC$ into \[ \max_{T\in\mathcal{T}_k} TSC_l(T),
\] 
where   $TSC_l(T) = \frac{1}{|D|} \sum_{C \in L_T} \sum_{x \in C} \mathcal{K}(\delta(x),\mathcal{P}_C )$
is the local objective function for dendrogram $T$; $L_T$ is the set of  leaf nodes of $T$; and $\mathcal{T}_k$ is the set of dendrograms with $k$ leaf nodes, i.e., $|L_T|=k$.

Although we only refine the local objective  $TSC_l$ in H-$\mathcal{K}C$, the dendrogram $T$ created by H-$\mathcal{K}C$ from a given dataset $D$ has a lower bound of the following global objective function $TSC_g^p$ for $\mathbb{T}_p$, i.e., the set of sub-dendrograms having at least $p$ up to $k$ leaf nodes of $T$:
\begin{equation*}
    TSC_g^p(T)=\frac{1}{|\mathbb{T}_p|}\sum_{T'\in\mathbb{T}_p}TSC_l(T').
\end{equation*}

When using distributional kernel $\mathcal{K}$  in H-$\mathcal{K}C$ to produce a fixed $T$ of $k$ leaf nodes, and let $T^q$ be a sub-dendrogram of $T$ with $q  \le k$ leaf nodes,  $TSC_l(T^q)$ is a monotonically non-increasing function wrt $q$, as shown in Lemma \ref{lem1}.
\begin{lemma}\label{lem1}
For a sub-dendrogram $T^q$, when contracting any of its two leaf nodes into one to create a new sub-dendrogram $T^{q-1}$, it holds that
\[
TSC_l(T^{q-1})\geq TSC_l(T^q)-\alpha,
\]
under the assumption that the Euclidean distance between $\mathcal{K}$'s feature maps of the data distributions in two leaf nodes is less than $\alpha$, where $\alpha \ll 1$ is a real constant.
\end{lemma}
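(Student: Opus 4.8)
The plan is to rewrite the local objective $TSC_l$ purely in terms of the kernel mean embeddings of the leaf nodes, so that contracting two leaves becomes a simple operation on two vectors in the RKHS. First I would use that $\delta(x)$ has feature map $\phi(x)$, so $\mathcal{K}(\delta(x),\mathcal{P}_C)=\langle\phi(x),\Phi(C)\rangle$ with $\Phi(C)=\frac{1}{|C|}\sum_{y\in C}\phi(y)$. Summing over $x\in C$ collapses the inner term:
\[
\sum_{x\in C}\mathcal{K}(\delta(x),\mathcal{P}_C)=\Big\langle\sum_{x\in C}\phi(x),\Phi(C)\Big\rangle=|C|\,\|\Phi(C)\|^2 .
\]
Hence $TSC_l(T)=\frac{1}{|D|}\sum_{C\in L_T}|C|\,\|\Phi(C)\|^2$, a size-weighted sum of squared RKHS norms of the leaf embeddings.

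Second, since contracting two leaf nodes $C_1,C_2$ into $C=C_1\cup C_2$ leaves every other leaf term of $T^q$ and $T^{q-1}$ unchanged, only three terms differ. Writing $n_1=|C_1|$, $n_2=|C_2|$, $n=n_1+n_2$, $\mu_1=\Phi(C_1)$, $\mu_2=\Phi(C_2)$, linearity of the kernel mean map gives $\Phi(C)=\frac{n_1\mu_1+n_2\mu_2}{n}$. I would then expand $n_1\|\mu_1\|^2+n_2\|\mu_2\|^2-n\|\Phi(C)\|^2$, and the cross terms collapse into a variance-decomposition (Ward-type) identity:
\[
TSC_l(T^q)-TSC_l(T^{q-1})=\frac{1}{|D|}\cdot\frac{n_1 n_2}{n_1+n_2}\,\|\Phi(C_1)-\Phi(C_2)\|^2 .
\]
This already confirms the gap is nonnegative, i.e. contracting a leaf never increases $TSC_l$.

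Finally, I would bound the coefficient. By AM--GM, $\frac{n_1 n_2}{n_1+n_2}\le\frac{n_1+n_2}{4}=\frac{n}{4}\le\frac{|D|}{4}$, the last step because $C\subseteq D$. Invoking the hypothesis $\|\Phi(C_1)-\Phi(C_2)\|<\alpha$ then yields
\[
TSC_l(T^q)-TSC_l(T^{q-1})<\frac{1}{|D|}\cdot\frac{|D|}{4}\cdot\alpha^2=\frac{\alpha^2}{4}<\alpha ,
\]
where the last inequality uses $\alpha\ll 1$ (so $\alpha<4$); rearranging gives $TSC_l(T^{q-1})\ge TSC_l(T^q)-\alpha$, as claimed.

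The main obstacle is not any single hard estimate but the bookkeeping: correctly isolating that only the three affected leaf terms change, deriving the Ward identity cleanly, and tracking how the $1/|D|$ normalization cancels against $\frac{n_1 n_2}{n_1+n_2}\le |D|/4$ so that the squared feature-map distance $\alpha^2$ collapses to the stated first-order bound $\alpha$. I would also flag any implicit normalization of the base point kernel (e.g. $\kappa(x,x)=1$) needed elsewhere to keep embeddings bounded, though the inequality above relies only on the stated hypothesis $\|\Phi(C_1)-\Phi(C_2)\|<\alpha$ directly.
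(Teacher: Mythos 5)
Your proposal is correct, and it takes a genuinely different route from the paper's proof. The paper works pointwise: it writes the change $A$ caused by the contraction as a sum over points of terms $\langle\Phi(\delta(x)),\Phi(P_1)-\Phi(P_2)\rangle$ (after substituting $\Phi(P')=\frac{n_1\Phi(P_1)+n_2\Phi(P_2)}{n_1+n_2}$), then applies Cauchy--Schwarz together with the normalization $\|\phi(x)\|\le 1$ to get $|A|\le\|\Phi(P_1)-\Phi(P_2)\|\le\alpha$. You instead collapse each leaf's contribution to $|C|\,\|\Phi(C)\|^2$ and derive the exact Ward-type identity
\[
TSC_l(T^q)-TSC_l(T^{q-1})=\frac{1}{|D|}\cdot\frac{n_1 n_2}{n_1+n_2}\,\|\Phi(C_1)-\Phi(C_2)\|^2,
\]
which you then bound by $\alpha^2/4<\alpha$. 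Your identity buys three things the paper's estimate does not: (i) it pins down the sign of the change, showing a contraction never increases $TSC_l$, whereas the paper only bounds $|A|$ in absolute value (incidentally, your sign tells you $TSC_l(T^q)$ is non-\emph{decreasing} in $q$, which corrects the direction of the monotonicity remark preceding the lemma); (ii) it gives the sharper quadratic bound $\alpha^2/4$ in place of $\alpha$, which would propagate to a constant $(k-p)\alpha^2/4$ in Theorem~\ref{thm2}; (iii) it needs no normalization of the point kernel's feature map, only the stated hypothesis on $\|\Phi(C_1)-\Phi(C_2)\|$, whereas the paper's Cauchy--Schwarz step relies on $\|\phi(x)\|\le 1$ (true for IDK, but an extra assumption). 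The paper's argument is marginally more elementary in that it avoids the variance decomposition, but it is strictly weaker; both establish the lemma as stated.
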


\begin{proof}
  For an internal node of $T^q$, denote the number of points in its left (right) leaf node as $n_1=|D_1|$ $(n_2=|D_2|)$, then a contraction of the two leaf nodes into one parent node will decrease $TSC_l(T^q)$ by
  {\small
    \begin{equation*}
        \begin{aligned}
            A = \frac{1}{|D|}[&\sum_{x\in D_1}\mathcal{K}(\delta(x),P_1)+\sum_{x\in D_2}\mathcal{K}(\delta(x),P_2)\\ & -\sum_{x\in D_1}\mathcal{K}(\delta(x),
        P')-\sum_{x\in D_2}\mathcal{K}(\delta(x),P')],
        \end{aligned}
    \end{equation*}
    }\noindent where $P_1$ $(P_2)$ is the left (right) leaf node's data distribution having data subset $D_1$ ($D_2)$. $P'$ is the combination of $P_1$ and $P_2$, having data subset $D_1\cup D_2$. And $\Phi(P')=\frac{n_1 \Phi(P_1)+n_2 \Phi(P_2)}{n_1+n_2}$.
Then 
    {
    \small
    \begin{equation*}
        \begin{aligned}
           |A| = & |\frac{1}{|D|}[\frac{n_2}{n_1+n_2}\sum_{x\in D_1}\left \langle\Phi(\delta(x)),\Phi(P_1)-\Phi(P_2)\right\rangle\\
            & +\frac{n_1}{n_1+n_2}\sum_{x\in D_2}\left \langle\Phi(\delta(x)),\Phi(P_2)-\Phi
            (P_1)\right\rangle]|\\
         \leq & \frac{n_2|D_1|+n_1|D_2|}{(n_1+n_2)|D|}\|\Phi(P_1)-\Phi(P_2)\|\\
         \leq & \alpha. 
        \end{aligned}
    \end{equation*} 
    }
    
The second line holds because of Cauchy-Schwarz inequality and the norm of $\mathcal{K}$'s feature map is less than 1. The last line holds based on the assumption stated in the Lemma. Finally, we have $TSC_l(T^{q-1})\geq TSC_l(T^q)-\alpha$.     
\end{proof}


\noindent With Lemma \ref{lem1}, we can get the following theorem.
\begin{theorem}\label{thm2}
Under the same assumption in Lemma \ref{lem1}, it holds for a dendrogram $T$ with $k$ leaf nodes created by H-$\mathcal{K}C$ that:
    \begin{equation*}
       \max_{T'\in\mathcal{T}_k} TSC_l(T')-(k-p)\alpha\leq TSC_g^p(T).
    \end{equation*}
\end{theorem}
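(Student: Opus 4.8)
The plan is to combine the single-contraction estimate of Lemma~\ref{lem1} with a telescoping argument over the sub-dendrograms in $\mathbb{T}_p$, and then to identify $TSC_l(T)$ with the maximum $\max_{T'\in\mathcal{T}_k}TSC_l(T')$. The observation that makes this last identification plausible is that, for a fixed number of leaves, $TSC_l$ depends only on the partition induced by the leaf clusters and not on the internal structure of the tree; since H-$\mathcal{K}C$ refines the point assignment specifically to increase $TSC_l$ (lines 14--18), the leaf clustering of $T$ should realize the best attainable $k$-way value, so that $TSC_l(T)=\max_{T'\in\mathcal{T}_k}TSC_l(T')$.

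First I would set up the telescoping. Fix any $T'\in\mathbb{T}_p$ and let $q$ be its number of leaves, so $p\le q\le k$. Because $T'$ is a sub-dendrogram of $T$, it is obtained from $T$ by collapsing subtrees, which can be carried out as a sequence of $k-q$ contractions, each merging a sibling pair of current leaves into their parent --- exactly the operation analysed in Lemma~\ref{lem1}. Applying Lemma~\ref{lem1} at every step of this chain and summing the $k-q$ resulting inequalities gives
\[
TSC_l(T') \ \ge\ TSC_l(T) - (k-q)\,\alpha \ \ge\ TSC_l(T) - (k-p)\,\alpha,
\]
where the second inequality uses $q\ge p$. Here I am implicitly using that the assumption of Lemma~\ref{lem1} (feature-map distance below $\alpha$) holds at each contraction encountered along the chain, which is what ``under the same assumption'' in the theorem is meant to guarantee.

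Next I would average this uniform bound over all members of $\mathbb{T}_p$. Since the right-hand side $TSC_l(T)-(k-p)\alpha$ is independent of $T'$, averaging yields
\[
TSC_g^p(T)=\frac{1}{|\mathbb{T}_p|}\sum_{T'\in\mathbb{T}_p}TSC_l(T') \ \ge\ TSC_l(T)-(k-p)\,\alpha.
\]
Substituting $TSC_l(T)=\max_{T'\in\mathcal{T}_k}TSC_l(T')$ then delivers the claimed inequality $\max_{T'\in\mathcal{T}_k}TSC_l(T')-(k-p)\alpha\le TSC_g^p(T)$.

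The main obstacle, and the step I would scrutinise most, is the identification $TSC_l(T)=\max_{T'\in\mathcal{T}_k}TSC_l(T')$. The telescoping and averaging are routine once Lemma~\ref{lem1} is in hand, and on their own they only produce the weaker bound $TSC_g^p(T)\ge TSC_l(T)-(k-p)\alpha$; the full strength of the theorem requires that the particular $T$ output by H-$\mathcal{K}C$ attain the global optimum of the flat $k$-cluster value $TSC_l$. Since the refinement in lines 14--18 is a Lloyd-type coordinate ascent that is only guaranteed to increase $TSC_l$ and converge to a stationary assignment, establishing genuine global optimality (rather than assuming it) is the delicate point; I would either invoke it as a standing assumption on the refined output, or argue that at convergence every point is already assigned to its most similar leaf distribution so that no single reassignment can raise $TSC_l$, and treat that stationary value as the maximum. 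A secondary point to verify is that every $T'\in\mathbb{T}_p$ is indeed reachable from $T$ by single sibling-pair contractions, so that Lemma~\ref{lem1} applies unchanged at each step of the chain.
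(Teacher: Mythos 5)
Your proof takes essentially the same route as the paper's: telescope Lemma~\ref{lem1} over the $k-q$ contractions to get $TSC_l(T^q)\geq TSC_l(T)-(k-q)\alpha$ for every sub-dendrogram, then average over $\mathbb{T}_p$ and identify $TSC_l(T)$ with $\max_{T'\in\mathcal{T}_k}TSC_l(T')$. The identification you rightly single out as the delicate step is also the one the paper makes silently (its proof simply asserts that each contraction ``induces a decrease from $\max_{T'\in\mathcal{T}_k}TSC_l(T')$ at most $\alpha$,'' i.e., it treats H-$\mathcal{K}C$'s output as attaining the maximum), so your write-up matches the paper's argument while being more explicit about that assumption.
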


\begin{proof}
    A dendrogram with $k$ leaf nodes can contract $k-q$ times to form a sub-dendrogram $T^q$ 
 with $q$ leaf nodes. According to Lemma \ref{lem1}, each contraction operation induces a decrease from $\max_{T'\in\mathcal{T}_k} TSC_l(T')$ at most $\alpha$. Then we have $\max_{T'\in\mathcal{T}_k} TSC_l(T')-(k-q)\alpha\leq TSC_l(T^q)$ for a sub-dendrogram $T^q$ with $q$ leaf nodes of $T$. By averaging over $q$ (from $p$ to $k$), we have the stated result in Theorem~\ref{thm2}.
\end{proof}

Compared with SpecWRSC (Bisect-Kmeans), which proves that its dendrogram approximates the optimal Dasgupta cost function (Revenue Objective), the dendrogram produced by H-$\mathcal{K}C$ provides a lower bound of $TSC_g^p$.
\begin{figure}[htb]
\begin{subfigure}
        {.245\linewidth}
      \centering   
\includegraphics[scale=0.24]{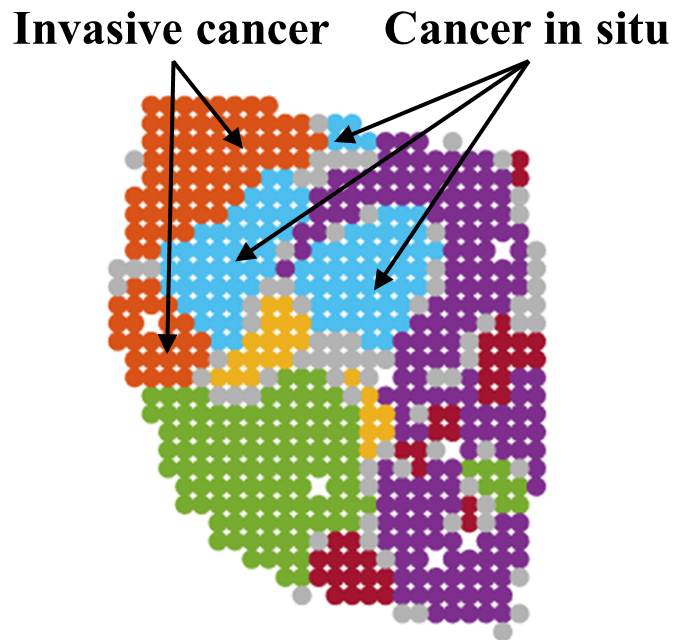}
        \caption{Ground Truth}
        \label{subfig: HER2 label}
    \end{subfigure}
\begin{subfigure}
        {.245\linewidth}
      \centering   
\includegraphics[width=\linewidth]{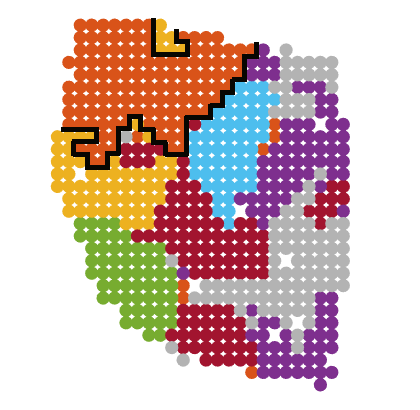}
        \caption{SpecWRSC}
        \label{subfig: flat HER2 SpecWRSC}
\end{subfigure}
\begin{subfigure}
        {.245\linewidth}
      \centering   
\includegraphics[width=\linewidth]{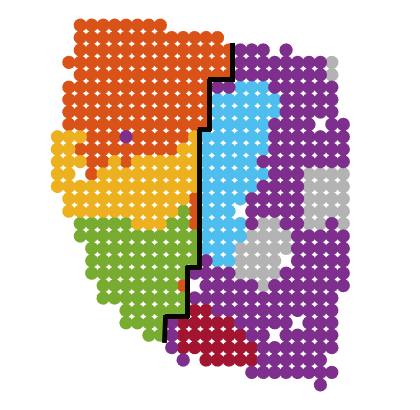}
        \caption{Bisect-Kmeans}
        \label{subfig: flat HER2 Bisect-Kmeans}
\end{subfigure}
\begin{subfigure}
        {.245\linewidth}
      \centering   
\includegraphics[width=0.93\linewidth]{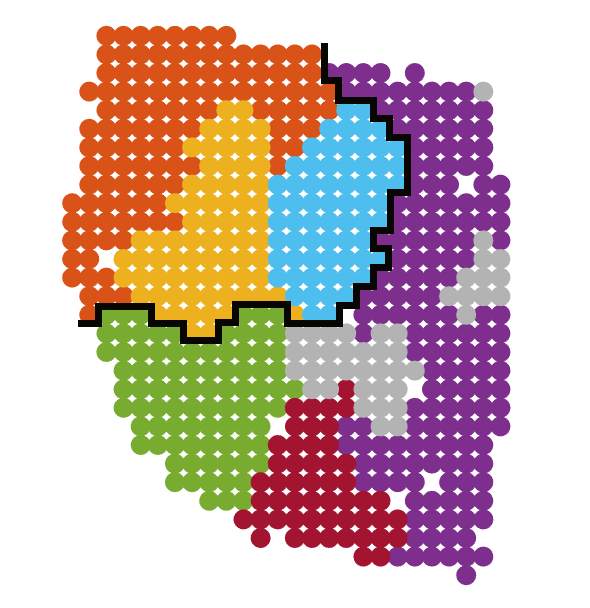}
\caption{H-$\mathcal{K}C$}
        \label{subfig: flat HER2 HKBC}
    \end{subfigure} 
     \begin{subfigure}
        {.245\linewidth}
              \centering   
\includegraphics[width=\linewidth]{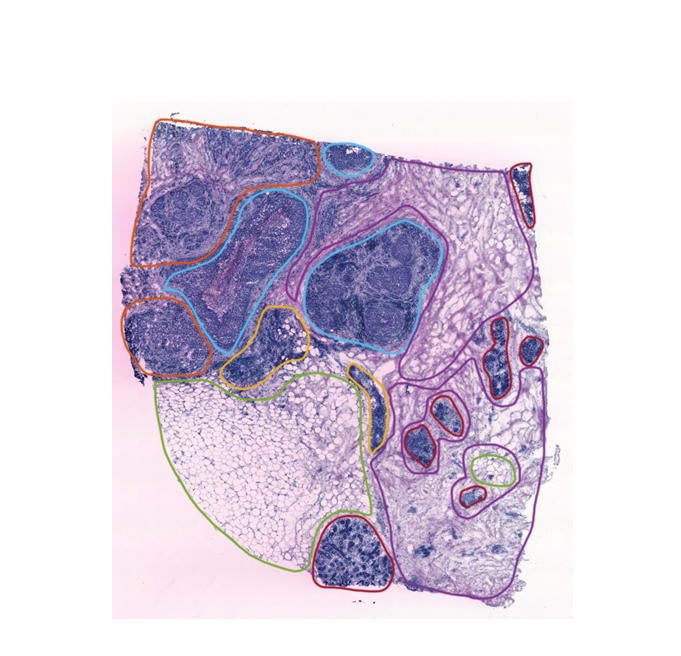}
        \caption{HER2 tumor data}
        \label{subfig: flat HER2 SpaGCN}
         \end{subfigure}
\begin{subfigure}
        {.245\linewidth}
      \centering   
\includegraphics[width=1.05\linewidth]{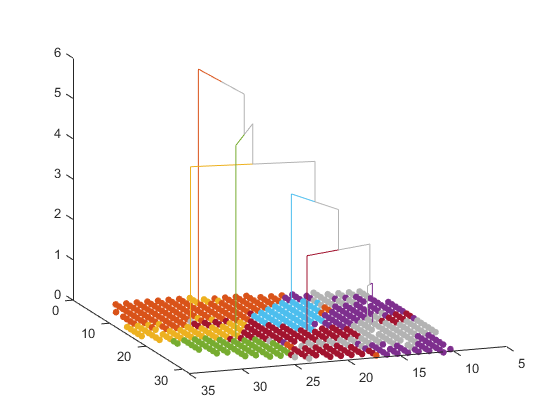}
        \caption{SpecWRSC: $\wp=0.58$}
        \label{subfig: dendro HER2 SpecWRSC}
\end{subfigure} 
\begin{subfigure}
        {.245\linewidth}
      \centering   
\includegraphics[width=1.05\linewidth]{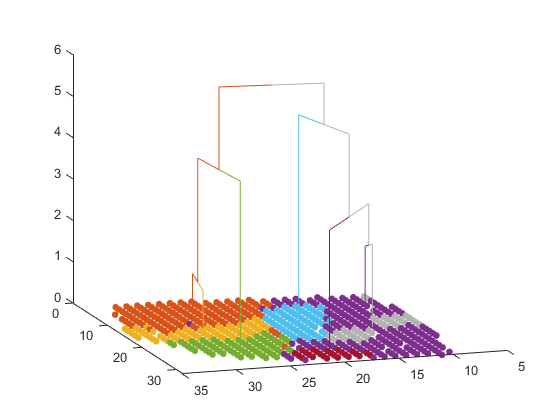}
        \caption{Bisect-Kmeans: $\wp=0.55$}
        \label{subfig: dendro HER2 Bisect-Kmeans}
\end{subfigure}
\begin{subfigure}
        {.245\linewidth}
      \centering   
\includegraphics[width=1.05\linewidth]{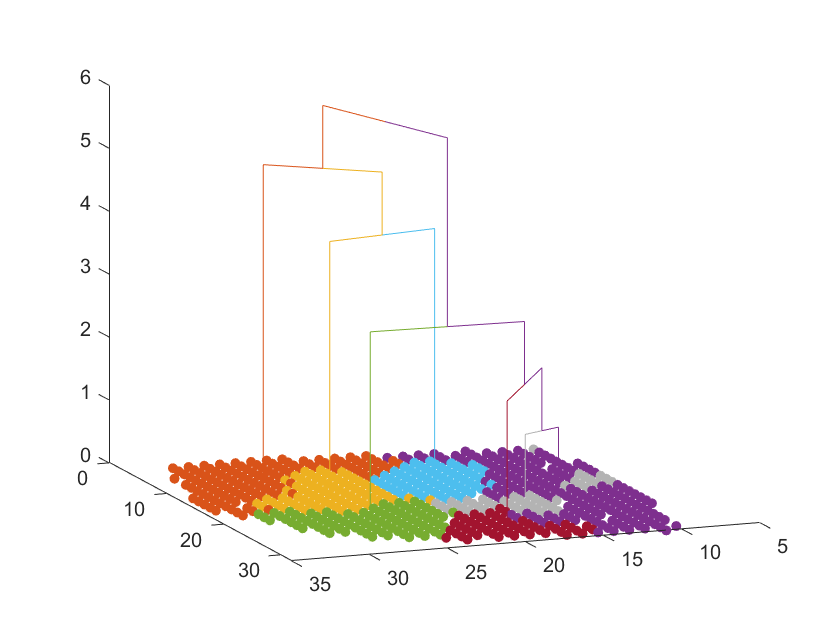}
        \caption{H-$\mathcal{K}C$: $\wp=0.75$}
        \label{subfig: dendro HER2 HKBC}
    \end{subfigure} 
    \caption{HER2 tumor dataset \protect\cite{andersson2020spatial} with ground-truth labels (a) and manual annotation plot of the tissue sample (e). Dendrograms of SpecWRSC (f), Bisect-Kmeans (g) and H-$\mathcal{K}C$ (h), and their flat clustering results shown in the leaf nodes are given in (b), (c) and (d), respectively. The solid black line  indicates the result of the first split in (b)-(d). The   Dendrogram Purity ($\wp$) is calculated in terms of the `Invasive cancer' and `Cancer in situ' regions. The flat clustering result of SpaGCN \protect\cite{hu2021spagcn}, a recent end-to-end deep learning flat clustering method which produces a comparable flat clustering of Bisect-Kmeans (c), is provided in Appendix \ref{append: SpaGCN}.
}
\label{fig: HER2}
\end{figure}

\subsection{H-$\mathcal{K}C$ could employ AHC}
\begin{proposition}
\label{prop: ahc}
With the same set $C$ of core clusters, AHC creates a dendrogram $T'$ similar to $T$ created by H-$\mathcal{K}C$ at line 12, if it merges two nodes $X=\{C_{i_1},...,C_{i_m}\}$ and $Y=\{C_{j_1},...,C_{j_n}\}$ with the maximum $f(X,Y):=\max_{C_i\in X,C_j\in Y}\mathcal{K}(C_i,C_j)$.
\end{proposition}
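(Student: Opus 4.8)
The plan is to route both algorithms through a common object: the complete weighted graph $G_C$ on the $k$ core clusters $\{G_1,\dots,G_k\}$ whose edge weights are the kernel similarities $\mathcal{K}(C_i,C_j)$. First I would observe that $f(X,Y)=\max_{C_i\in X,\,C_j\in Y}\mathcal{K}(C_i,C_j)$ is precisely the single-linkage criterion, so the AHC of the proposition merges, at each step, the two nodes joined by the strongest surviving cross-edge. It is classical that single-linkage clustering is equivalent to the maximum-weight spanning tree (here abbreviated MST) of $G_C$: reading $T'$ top-down, the root split into its two children is exactly the partition obtained by deleting the weakest edge of the MST, and this recurses inside each child. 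Thus the AHC side is completely characterised by MST cuts, and in particular the leaf nodes of $T'$ are the $k$ singleton core clusters.

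Next I would recast the divisive construction of H-$\mathcal{K}C$ (lines 4--12) as a recursive cut of the same set of core clusters. At a node $\hat{C}$ the rule picks the two largest core clusters $\hat{G}_1,\hat{G}_2$ as seeds and assigns every remaining $G$ to $\argmax_{i\in[1,2]}\mathcal{K}(\mathcal{P}_G,\mathcal{P}_{\hat{G}_i})$, i.e.\ to its most similar seed. I would prove similarity in two layers. The first, weaker layer establishes that $T$ and $T'$ induce the same leaf partition (each leaf is a single core cluster) and that both keep the most similar core clusters in a common subtree: for AHC this is immediate because the strongest edges are merged first, and for H-$\mathcal{K}C$ it is the grouping guarantee already argued for the second desired property (line~7). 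This already yields that $T$ and $T'$ agree on the flat clustering at the leaves and share the three desired properties. The second, stronger layer attempts to match the internal topology by induction on the recursion depth, showing that each H-$\mathcal{K}C$ split coincides with the MST weakest-edge cut at the corresponding node.

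The inductive step of that second layer is where the main difficulty lies, because it must reconcile a \emph{local} nearest-seed rule with the \emph{global} connectivity encoded by the MST. When the core clusters are well separated---the regime the method targets---the two largest seeds fall in distinct MST components of the weakest-edge cut and each remaining cluster's most-similar seed sits on its own MST side, so the nearest-seed split and the MST cut coincide and one obtains $T=T'$ exactly. In general they need not: a cluster's single strongest similarity to a seed can pull it across the cut even though its MST path leaves it on the other side, so the two partitions can differ by a few boundary clusters. I would therefore formalise the separation condition under which equality holds and, outside it, bound the discrepancy between the two partitions, which is precisely the sense in which the proposition claims the dendrograms are \emph{similar} rather than identical. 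Matching this local assignment to the global spanning-tree structure is the step I expect to be the main obstacle.
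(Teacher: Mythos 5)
Your characterisation of the AHC side is sound, and it is essentially the same single-linkage fact the paper invokes from the literature: AHC merging by maximum $f$ and a DHC that \emph{minimizes} $f$ at every split produce identical dendrograms. The genuine gap lies in how you bridge from there to H-$\mathcal{K}C$. Your second layer---an induction showing each nearest-seed split coincides with the weakest-edge cut of the maximum spanning tree, exactly under a separation condition and approximately otherwise---is precisely the step you yourself flag as the main obstacle, and you never supply the separation condition or the discrepancy bound that would complete it. Your first layer is also not sound as stated: assigning each core cluster to the more similar of the two \emph{largest} seeds does not guarantee that two mutually very similar core clusters land in the same subset (each can prefer a different seed), so appealing to ``the grouping guarantee already argued for the second desired property'' does not deliver the claim. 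Both layers therefore rest on the unproved, and in general false, assertion that the local nearest-seed rule reproduces the global single-linkage structure.

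The paper's own proof sidesteps this entirely with one observation your plan is missing. It concedes outright that H-$\mathcal{K}C$'s bisecting rule is not an assessment criterion at all---it does not choose among candidate splits, so it does \emph{not} produce the same dendrogram as AHC or as the $f$-minimizing DHC. The ``similarity'' is instead derived from how the core clusters were constructed: since line 1 of H-$\mathcal{K}C$ obtains them via $\mathcal{K}$-\texttt{psKC} with similarity threshold $\tau$, distinct core clusters have kernel similarity at most $\tau$, hence \emph{every} split that H-$\mathcal{K}C$ performs satisfies $f(X,Y)\leq\tau$. This upper bound---not an exact or approximate coincidence with spanning-tree cuts---is the entire content of ``similar'' in the proposition: any split H-$\mathcal{K}C$ makes is already near-optimal for the single-linkage criterion because cross-cluster similarities were capped by construction. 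Without importing this quantitative handle (or some equivalent) linking split quality to the core-cluster construction, your approach cannot be completed; with it, the spanning-tree machinery becomes unnecessary.
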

Note that $f$ is a single-linkage function that uses $\mathcal{K}$.
Although H-$\mathcal{K}C$ does not minimize $f$ in each split, it has an upper bound of $f$ for each split. Therefore, it yields a dendrogram similar to that produced by AHC using the same $f$,  where the arguments in $f$ are clusters/distributions and their similarity is measured by a distributional kernel. The proof is provided in Appendix \ref{append: proof}.

Proposition \ref{prop: ahc} is consistent with the fact \cite{LABBE2023555,gagolewski2024clustering} that AHC and DHC create the same dendrogram if they use the same single-linkage function to assess the goodness of a merge and a split, respectively. In other words, lines 4-12 in H-$\mathcal{K}C$ could use AHC because $f$ is a single-linkage function in terms of distributional kernel; but Bisect-Kmeans and SpecWRSC cannot employ AHC because their objective functions are equivalent to the weighted average linkage function.

The stated procedure (lines 4-12) in Algorithm \ref{alg:HKC} is preferred over the use of AHC because the former is more efficient to build a dendrogram, i.e., 
for each split using $f(X,Y)$,  where $|X|=|Y|=m$, the time complexity is $O(m)$ for H-$\mathcal{K}C$; but it is $O(m^2)$ for merge using $f$ in AHC.




\section{Empirical Evaluation}
\begin{figure}[htb]
\begin{subfigure}
        {.33\linewidth}
      \centering   
\includegraphics[scale=0.17]{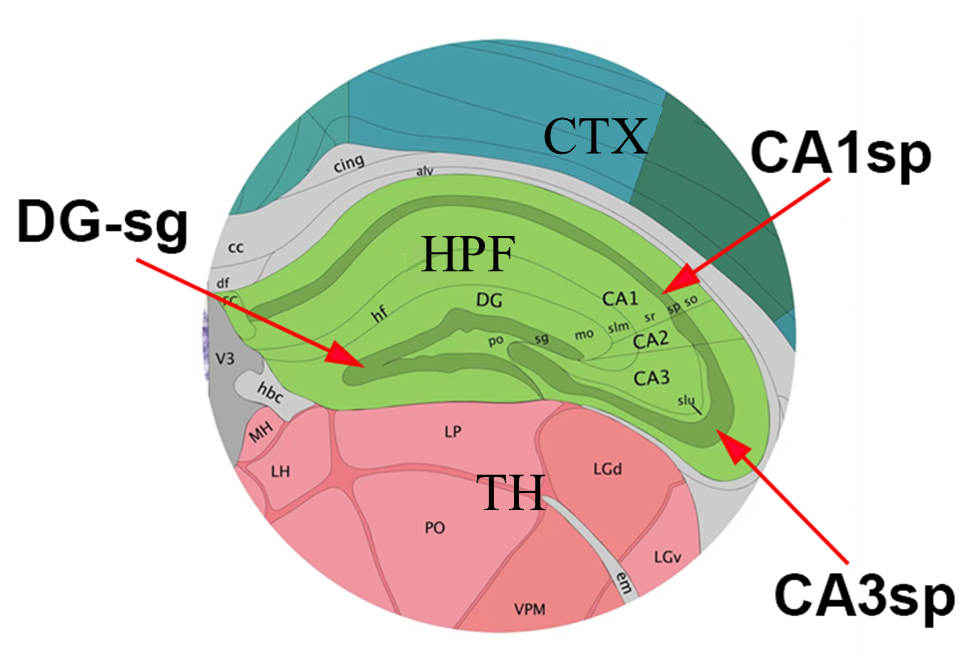}
        \caption{ Allen Brain Atlas}
        \label{subfig: Slide-seq}
    \end{subfigure}
     \begin{subfigure}
        {.33\linewidth}
\includegraphics[scale=0.19]{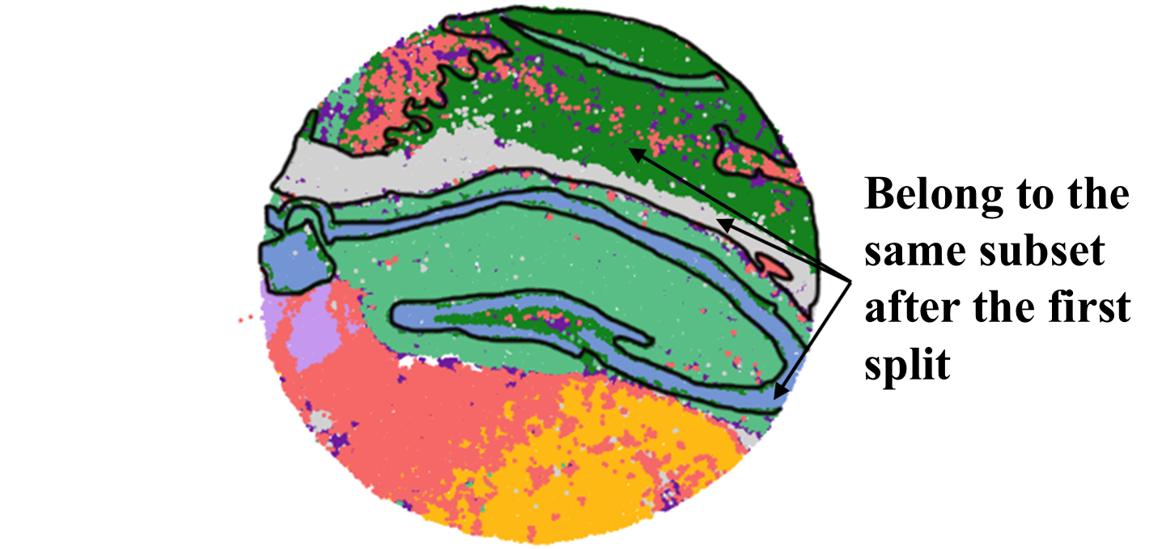}
        \caption{Bisect-Kmeans}
        \label{subfig: flat slide Bisect-Kmeans}
    \end{subfigure} 
   \begin{subfigure}
        {.33\linewidth}
      \centering   
\includegraphics[scale=0.205]{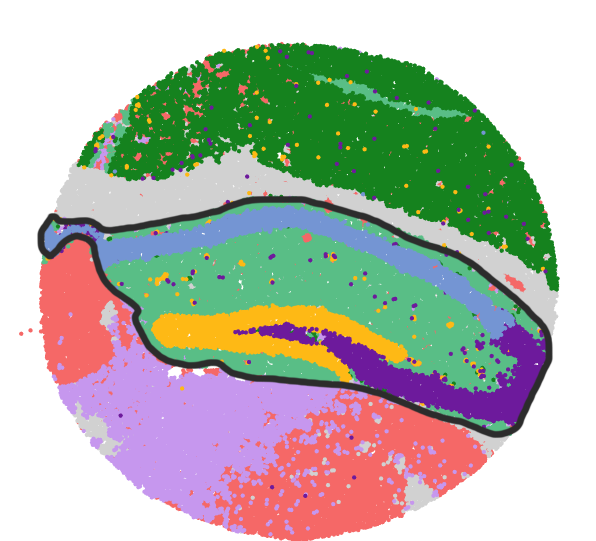}
\caption{H-$\mathcal{K}C$}
        \label{subfig: flat slide HKBC}
    \end{subfigure} 
     \begin{subfigure}
        {.49\linewidth}
      \centering   
\includegraphics[scale=0.2]{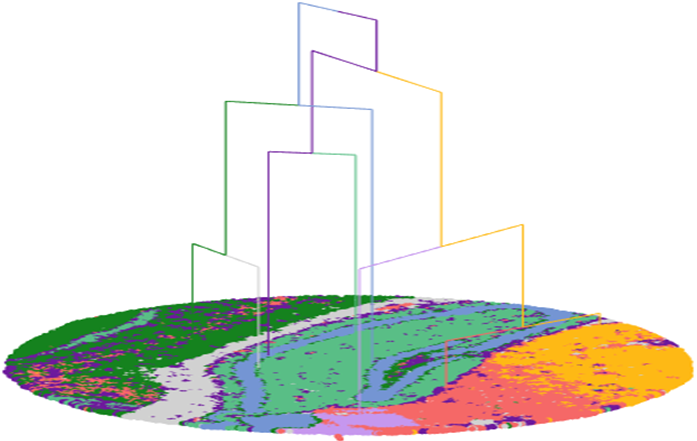}
        \caption{Bisect-Kmeans}
        \label{subfig: dendro slide Bisect-Kmeans}
    \end{subfigure}
   \begin{subfigure}
        {.49\linewidth}
      \centering   
\includegraphics[scale=0.2]{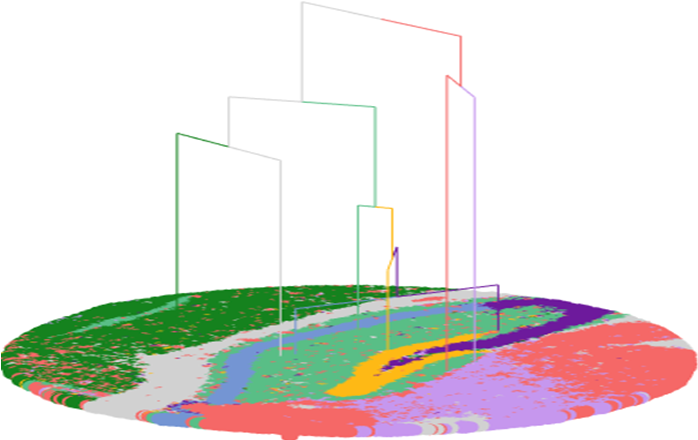}
        \caption{H-$\mathcal{K}C$}
        \label{subfig: dendro slide HKBC}
    \end{subfigure} 
    \caption{The Slide-seq V2 dataset on mouse hippocampus \protect\cite{stickels2021highly}. Allen Brain Atlas (a) shows the three primary regional divisions: CTX, HPF and TH. The coronal mouse olfactory bulb from the Allen Brain Atlas in the central area is labelled as CA1sp and CA3sp. Dendrograms of Bisect-Kmeans (d) and H-$\mathcal{K}C$ (e), and their flat clustering results shown in the
leaf nodes are given in (b) and (c), respectively. The solid black line  indicates the result of the first split in (b) and the third split in (c).  
}
\label{fig: Slide}
\end{figure}

We evaluate the effectiveness of H-$\mathcal{K}C$ in two biological datasets derived from Spatial Transcriptomics \cite{marx2021method}, which is chosen to be the method of the year in 2021 by Nature Methods. We select these datasets because DHC can potentially provide a deeper insight into the structure of the tissue samples. The first one is the over-expression of HER2 (human epidermal growth factor receptor 2) on tumor cells which identifies two major subtypes of breast cancer. 
It includes 10053 genes and 607 cells collected through spatial transcriptomics. The second one is the Slide-seq V2 dataset on mouse hippocampus which contains 53208 cells, each having 23264 genes. Note that Slide-seq V2  does not have ground-truth\footnote{This dataset has no ground truth. The Allen Brain Atlas (Figure \ref{subfig: Slide-seq}), which provides the regional divisions, is employed to understand the original tissue structure.} label. Each dataset is preprocessed, which integrates the spatial location information of individual cells in a sample tissue and the gene expression information of each cell, into vector representation (HER2 has 120 attributes and 607 points; and Slide-seq V2 has 80 attributes and 51367 points). The details are given in Appendix \ref{append: ST data}.

As stated in the Related Work section, since there are no existing metrics to evaluate the quality of a dendrogram's structure, we use visualization to compare the structures of dendrograms generated by various DHC algorithms.

For the HER2 tumor dataset, the two cancer regions are the center of attention, i.e., `Invasive cancer' and `Cancer in situ', shown in Figure \ref{subfig: HER2 label}.  The observations from the dendrograms produced by Bisect-Kmeans, SpecWRSC and  H-$\mathcal{K}C$,  shown in Figure \ref{fig: HER2}, are given as follows:
\begin{itemize}
    \item SpecWRSC produces a highly unbalanced dendrogram which divides a cluster from the rest of the clusters at each internal node. Part of the `Cancer in situ' is mixed with the `Invasive cancer'.
    \item Although both Bisect-Kmeans and H-$\mathcal{K}C$ produce rather balanced dendrograms, H-$\mathcal{K}C$ has grouped the two cancer regions into the same subset, and the non-cancer regions (regular tissue) into another in the first split; but Bisect-Kmeans divides the two cancer regions into two separate subsets, mixing the cancer and non-cancer regions. In addition, Bisect-Kmeans incorrectly combines `Cancer in situ' with `Invasive cancer', treating them as a single cluster, as SpecWRSC has done. Only H-$\mathcal{K}C$ largely correctly identified `Invasive cancer' (into one cluster) and `Cancer in situ' (into two regions). That is the reason why H-$\mathcal{K}C$ has significantly higher Dendrogram Purity ($\wp$) than SpecWRSC and Bisect-Kmeans.
\end{itemize}

For Slide-seq V2, the center of attention is the three regions marked as  CA1sp, CA3sp and DG-sg  in Figure~\ref{subfig: Slide-seq}:
\begin{itemize}
    \item Bisect-Kmeans produces a dendrogram with poor structure: its first split incorrectly groups CA1sp, DG-sg CA3sp, and CTX into the same subset, as shown in (b). Note that this subset consists of fragment regions separated by regions belonging to anohter subset. Further, Bisect-Kmeans  groups CA1sp, DG-sg and CA3sp at the center of attention into one single cluster in a leaf node.
    \item H-$\mathcal{K}C$ produces a dendrogram that is consistent with the Allen Brain Atlas. The first split separates TH (the bottom region) from CTX \& HPF (as a subset). The third split separates HPF from CTX. And the subsequent splits successfully  CA1sp, CA3sp ad DG-sg as three different clusters.
    \item SpecWRSC  produces out-of-memory  error on this large dataset; thus it has no result in Figure \ref{fig: Slide}.
\end{itemize}

In summary, both existing objective-based DHC, i.e., SpecWRSC and Bisect-Kmeans, do not produce dendrograms which have the three above-mentioned desired properties. Only H-$\mathcal{K}C$ is capable of generating a dendrogram with a structure that is consistent with the biological regions and accurately detects clusters in the center of attention.

\subsection{Comparison with More Baselines} 

In addition to SpecWRSC and Bisect-Kmeans, we include comparison with more baselines (DHC methods DIANA \cite{DIANA-1990}, HDP \cite{zhu2022hierarchical} and AHC method SCC \cite{scc}) on additional 13 datasets. The results in terms of dendrogram purity are shown in Table \ref{tab: baseline} of Appendix \ref{append: more baselines}.
\begin{figure}[h]
      \centering   
\includegraphics[scale=0.6]{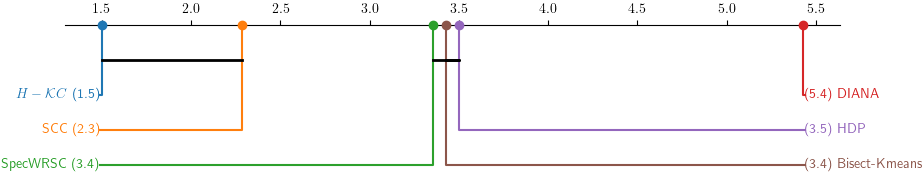}
        \caption{Critical difference diagram.}
        \label{fig: cd diagram}
\end{figure}
Here we summarize the results in Appendix \ref{append: more baselines} with the corresponding Critical Difference (CD) diagram of the post-hoc Nemenyi test in Figure \ref{fig: cd diagram}. H-$\mathcal{K}C$ has the highest average ranking and there is no critical difference between H-$\mathcal{K}C$ and SCC. This is consistent with our analysis in Proposition \ref{prop: ahc}.

\section{Discussion}
The time complexity of H-$\mathcal{K}C$ is $\mathcal{O}(kn+s^2)$, where $n$ is the given dataset size, $s$ is the data subset size used to produce the dendrogram (at line 12 in Algorithm \ref{alg:HKC}), and $k$ is the  number of leaf nodes. Both Bisect-Kmeans and H-$\mathcal{K}C$ have linear time complexity. The time complexity of SpecWRSC is at least quadratic. The detailed analysis and the scaleup test are given in Appendix \ref{append: scale}. 

For the choice of distributional kernel in H-$\mathcal{K}C$, we use the recently introduced Isolation Distributional Kernel (IDK) \cite{ting2020isolation} which 
can deal with clusters of varied densities better than Gaussian Distributional Kernel (GDK). Our ablation study comparing IDK and GDK in H-$\mathcal{K}C$ is consistent with the previous findings. We have used psKC as $\mathcal{A}$ in line 1 of H-$\mathcal{K}C$ to produce the results in the last section. The examination of different existing algorithms as $\mathcal{A}$, three ablation studies and a hyperparameter sensitivity analysis of H-$\mathcal{K}C$ are provided in Appendix \ref{append: more exps}.

\section{Concluding Remarks}

Our discovery of the new approach to objective-based DHC has a much wider implication. Exact optimization on the set-oriented bisecting assessment criterion has been the tool of choice in order to produce an optimized dendrogram. Yet, we show that a greedy algorithm H-$\mathcal{K}C$ exists to lower bound an explicit objective,  producing a dendrogram with the three desired properties that existing optimization-based methods could not produce, and H-$\mathcal{K}C$ runs in linear time.

This is possible mainly because of a paradigm shift---the problem is defined in terms of distributions rather than the traditional set-oriented approach. Two key differences are: (I) Bisecting at each internal node of a dendrogram can be better achieved via a distributional kernel, without a typical set-oriented bisecting assessment criterion. The exact optimization of a set-based cost function has no payoff mainly because it does not ensure that similar clusters are grouped in the same subset in each split. (II) The first step in the distribution-oriented approach in deriving a set of core clusters is crucial in (a) discovering clusters of varying shapes, sizes and densities, and (b) enabling a dendrogram with the three desired properties to be constructed in a new way as stated in (I). We show that this step can be accomplished efficiently from a small subset of a given dataset. 


\newpage
\appendix
\onecolumn

\section{Background}
\subsection{Isolation Distributional Kernel}

We first describe Isolation Kernel \cite{ting2018isolation}. Let $D \subset \mathbb{R}^d$ be a dataset sampled from an unknown distribution $P_D$; and $\mathbb{H}_\psi(D)$ denote the set of all partitionings $H$ that are admissible from $\mathcal{D} \subset D$, which is a random subset of $\psi$ points. Each partition $\theta[\mathbf{z}] \in H$ isolates a point $\mathbf{z} \in \mathcal{D}$ from the rest of the points in $\mathcal{D}$.
Let $\mathds{1}(\cdot)$ be an indicator function.

 For any two points $x,y \in \mathbb{R}^d$,
	Isolation Kernel of $x$ and $y$ is defined to be
	the expectation taken over the probability distribution on all partitionings $H \in \mathds{H}_\psi(D)$ that both $x$ and $y$  fall into the same isolating partition $\theta[\mathbf{z}] \in H$, where $\mathbf{z} \in \mathcal{D} \subset D$, $\psi=|\mathcal{D}|$:
\begin{equation*}
\begin{aligned}
    \kappa_I(x,y\ |\ D) & = {\mathbb E}_{\mathds{H}_\psi(D)} [\mathds{1}(x,y \in \theta[\mathbf{z}]\ | \ \theta[\mathbf{z}] \in H)] \\
     & = {\mathbb E}_{\mathcal{D} \subset D} [\mathds{1}(x,y\in \theta[\mathbf{z}]\ | \ \mathbf{z}\in \mathcal{D})] 
\end{aligned}
\label{eqn_kernel}
\end{equation*}


In practice, Isolation Kernel $\kappa_I$ is constructed using a finite number of partitionings $H_i, i=1,\dots,t$, where each $H_i$ is created using randomly subsampled $\mathcal{D}_i \subset D$; and $\theta$ is a shorthand for $\theta[\mathbf{z}]$:
\begin{equation*}
    \begin{aligned}
        \kappa_I(x,y\ |\ D)   &=   \frac{1}{t} \sum_{i=1}^t   \mathds{1}(x,y \in \theta\ | \ \theta \in H_i) \\
&= \frac{1}{t} \sum_{i=1}^t \sum_{\theta \in H_i}   \mathds{1}(x\in \theta)\mathds{1}(y\in \theta) 
    \end{aligned}
    \label{Eqn_IK}
\end{equation*}
Let Isolation Kernel be implemented using isolating hyperspheres \cite{ting2020isolation} for each partitioning from a sample $\mathcal{D}$ of $\psi$ points. The radius of each hypersphere centered at $\mathbf{z}$ is the distance between $\mathbf{z}$ and its nearest neighbor in $\mathcal{D}\setminus \{\mathbf{z}\}$.
Given a partitioning $H_i$, let feature $\phi_i(x)$ be a $\psi$-dimensional binary column vector representing all hyperspheres $\theta_j \in H_i$, $j=1,\dots,\psi$; where $x$ falls into one of the $\psi$ hyperspheres or none.
The $j$-component of the vector due to $H_i$ is:
$\phi_{ij}(x)=\mathds{1}(x\in \theta_j\ |\ \theta_j\in H_i)$. Given $t$ partitionings, $\phi(x)$ is the concatenation of $\phi_1(x),\dots,\phi_t(x)$. After that, $\phi(x)$ is divided by $\sqrt{t}$ for normalization such that $\|\phi(x)\|=1$.

Isolation Distributional Kernel \cite{ting2020isolation} is developed from Isolation Kernel $\kappa_I$. Since $\kappa_I$ has finite dimensional feature map $\phi$. The feature map of IDK $\Phi$ is simply calculated as $\Phi(\mathcal{P}_D)=\frac{1}{|D|}\sum_{x\in D}\phi(x)$. Since $\|\phi(x)\|=1$, it holds that $\|\Phi(\mathcal{P}_D)\|\leq 1$.

\subsection{Point-Set Kernel Clustering}
Point-Set kernel clustering \cite{pskc} employs the point-set kernel (distribution kernel where one distribution is a single point) $K$ to characterize clusters. It identifies all members of each cluster by first locating the seed of the dataset. Then, it expands its members in the cluster's local neighborhood 
which grows at a set rate ($\varrho$) 
incrementally; and it stops growing when all unassigned points have similarity w.r.t. the cluster falling below a threshold ($\tau$). 
The process repeats for the next cluster using the remaining points in the given dataset $D$, yet to be assigned to any clusters found so far, until $D$ is empty or no point can be found which has similarity more than $\tau$. All remaining points after the clustering process are noise as they are less than the set threshold for each of the clusters discovered. The  \texttt{psKC} procedure is shown in Algorithm \ref{alg:pset-KC}, where $K$ is point-set kernel.

\begin{algorithm}[h]
 \caption{point-set Kernel Clustering (\texttt{psKC})}
 \label{alg:pset-KC}
 \textbf{Input}: $D$: dataset,  $\tau$: similarity threshold, $\varrho$: growth rate\\
\textbf{Output}: $G^j, j=1,\dots,k$: $k$ clusters, $N$: noise set\\
\begin{algorithmic}[1]
\vspace{-4mm}
\STATE $k=0$\\
\WHILE{$|D|>1$}
\STATE $x_p = \argmax_{x \in D} K(x,D)$ \hfill $\triangleright$ \textcolor{blue}{Seed}\\
\STATE $x_q = \argmax_{x \in D \setminus \{x_p\}} K(x, \{x_p\})$\\
\STATE $\gamma = (1-\varrho) \times K(x_q, \{x_p\})$ \\
\IF{$\gamma \le \tau$}
\STATE Terminate while-do loop\\
\ENDIF
\STATE $k$++\\
\STATE $G_0^k = \{x_p, x_q \}$ \hfill $\triangleright$ \textcolor{blue}{Initial cluster $k$}\\
\FOR{($i=1;\ \gamma  > \tau;\ i$++)}
  \STATE $G_i^k = \{ x \in D\ |\ K(x, G_{i-1}^k) > \gamma \}$\\
  \STATE $\gamma = (1-\varrho) \gamma$\\
  \ENDFOR
  \STATE $G^k=G_{i-1}^k$ \hfill $\triangleright$ \textcolor{blue}{Cluster $k$ grown}\\
  \STATE $D= D \setminus G^k$\\
\ENDWHILE
\STATE $N = D$\;
\STATE \textbf{Return} $G^j, j=1,\dots,k;\ N$\\
 \end{algorithmic}
\end{algorithm}

In Algorithm \ref{alg:pset-KC}, the number of clusters are not automatically determined. In order to fit our setting which takes the number of clusters $k$ as input and incorporate the adaption from point-set kernel to distributional kernel, we introduce distributional kernel based \texttt{psKC} ($\mathcal{K}$-\texttt{psKC}) in Algorithm \ref{alg:pset-KC2}, where $\mathcal{K}$ represents distributional kernel. We apply Algorithm \ref{alg:pset-KC2} on a subset of the entire dataset to obtain $k$ core clusters in the first line of H-$\mathcal{K}C$. The noise set $N$ is not part of the core clusters. 

\begin{algorithm}[h]
 \caption{Distributional kernel based  \texttt{psKC} ($\mathcal{K}-\texttt{psKC}$)}
 \label{alg:pset-KC2}
 \textbf{Input}: $D$: dataset, $k$: number of clusters, $\tau$: similarity threshold, $\varrho$: growth rate\\
\textbf{Output}: $G^j, j=1,\dots,k$: $k$ clusters\\
\begin{algorithmic}[1]
\vspace{-4mm}
\STATE $j=0$\\
\WHILE{$|D|>1$ and $j<k$}
\STATE $x_p = \argmax_{x \in D} \mathcal{K}(\delta(x),P_D)$ \hfill $\triangleright$ \textcolor{blue}{Seed}\\
\STATE $x_q = \argmax_{x \in D \setminus \{x_p\}} \mathcal{K}(\delta(x), \delta(x_p))$\\
\STATE $\gamma = (1-\varrho) \times \mathcal{K}(\delta(x_q), \delta(x_p))$ \\
\IF{$\gamma \le \tau$}
\STATE Terminate while-do loop\\
\ENDIF
\STATE $j$++\\
\STATE $G_0^j = \{x_p, x_q \}$ \hfill $\triangleright$ \textcolor{blue}{Initial cluster $k$}\\
\FOR{($i=1;\ \gamma  > \tau;\ i$++)}
  \STATE $G_i^j = \{ x \in D\ |\ \mathcal{K}(\delta(x), P_{G_{i-1}^j}) > \gamma \}$\\
  \STATE $\gamma = (1-\varrho) \gamma$\\
  \ENDFOR
  \STATE $G^j=G_{i-1}^j$ \hfill $\triangleright$ \textcolor{blue}{Cluster $j$ grown}\\
  \STATE $D= D \setminus G^j$\\
\ENDWHILE
\STATE \textbf{Return} $\{G^1,G^2,...,G^k\}$.\\
 \end{algorithmic}
\end{algorithm}

\subsection{Dendrogram Purity}\label{appendix: purity}
Given a dendrogram $T$ produced by a hierarchical clustering algorithm from a dataset $D=\{\mathbf{x}_1,\dots,\mathbf{x}_n\}$.
Let $\mathsf{P}=\left\{\left(\mathbf{x}, \mathbf{x}^{\prime}\right) | \mathbf{x} \ne \mathbf{x}^{\prime} \in D, \ell(\mathbf{x})=\ell(\mathbf{x}^{\prime})\right\}$  be
the set of pairs of different points that
have the same ground-truth cluster label, where $\ell(\mathbf{x})$ denotes the ground-truth  cluster label  of $\mathbf{x}$; and $L_i, i=1,\dots,\Bbbk$
be the ground-truth labels of $\Bbbk$ clusters.
Formally, the $Dendrogram$ $Purity$ of $T$ is defined as ~\cite{kobren2017hierarchical}
\begin{equation}~\label{equ:dp}
	\mathrm{Purity}(T)=\frac{1}{\left|\mathsf{P}\right|} \sum_{i=1}^{\Bbbk} \sum_{(\mathbf{x}, \mathbf{x}^{\prime}) \in \mathsf{P}, \ell(\mathbf{x})=L_i} \mathfrak{f}\left(\mathfrak{g}\left(\mathfrak{h}\left(\mathbf{x}, \mathbf{x}^{\prime}\right)\right),L_i\right) \nonumber,
\end{equation}
where $\mathfrak{h}\left(\mathbf{x}, \mathbf{x}^{\prime}\right)$ is the least common ancestor of $\mathbf{x}$ and $\mathbf{x}^{\prime}$ in $\mathbb{T}$, $\mathfrak{g}(\mu)\subset D$
is the set of points in all the descendant leaf nodes of internal node $\mu$ in
$T$,
and $\mathfrak{f}(S, L_i)=\frac{|\{ \mathbf{x} \in S\ |\ \ell(\mathbf{x})=L_i\}|}{S}$
computes the fraction of $S$ that
matches the ground-truth label $L_i$. 

\section{Proof}
\label{append: proof}

\begin{proposition}
With the same leaf nodes of dendrogram $T$ created by H-$\mathcal{K}C$, AHC creates a similar dendrogram $T'$, if it merges two nodes $X=\{C_{i_1},...,C_{i_m}\}$ and $Y=\{C_{j_1},...,C_{j_n}\}$ with the maximum $f(X,Y):=\max_{C_i\in X,C_j\in Y}\mathcal{K}(C_i,C_j)$.
\end{proposition}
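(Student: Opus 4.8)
The plan is to reduce the statement to the known equivalence between agglomerative and divisive clustering under a shared single-linkage criterion, and then to verify that the bisecting rule of H-$\mathcal{K}C$ (line~7 of Algorithm~\ref{alg:HKC}) realizes such a single-linkage split. First I would record that $f(X,Y)=\max_{C_i\in X,C_j\in Y}\mathcal{K}(C_i,C_j)$ is, by definition, the single-linkage similarity between the node sets $X$ and $Y$, with the pairwise similarity supplied by the distributional kernel $\mathcal{K}$ on core clusters. I would then invoke the cited fact \cite{LABBE2023555,gagolewski2024clustering}: an AHC procedure that merges the pair of nodes with maximum single-linkage similarity and a DHC procedure that splits each node at its weakest single-linkage connection produce the same dendrogram on the same leaf set $C$. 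This reduces the claim to showing that the split performed by H-$\mathcal{K}C$ agrees with, or closely approximates, the single-linkage divisive split.

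Second, I would analyze the split rule directly. Cutting a maximum spanning tree at its minimum-weight edge---the divisive single-linkage operation---produces the partition $(C_1,C_2)$ that \emph{minimizes} $f(C_1,C_2)$, since in a maximum spanning tree the tree edge crossing any cut is the heaviest cross-edge. When H-$\mathcal{K}C$ instead bisects a node $\hat{C}$ using its two largest core clusters $\hat{G}_1,\hat{G}_2$ as anchors, every remaining cluster $G$ is placed in the part $C_j$ maximizing $\mathcal{K}(\mathcal{P}_G,\mathcal{P}_{\hat{G}_j})$. I would show this assignment forces an upper bound on $f(C_1,C_2)$: any cross-part pair $(G,G')$ with $G\in C_1,\ G'\in C_2$ satisfies $\mathcal{K}(\mathcal{P}_G,\mathcal{P}_{\hat{G}_1})\ge\mathcal{K}(\mathcal{P}_G,\mathcal{P}_{\hat{G}_2})$ and the symmetric inequality for $G'$, so the cross-part similarities are capped by the anchor-separating thresholds. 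Thus, although H-$\mathcal{K}C$ does not attain the minimizer of $f$, it separates $\hat{C}$ along a boundary whose maximum cross-similarity is bounded above, i.e., along a comparably weak connection.

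Finally I would assemble the pieces: the divisive single-linkage dendrogram equals the AHC dendrogram $T'$ by the cited equivalence, and H-$\mathcal{K}C$'s greedy split respects the same ordering of connection strengths up to the established upper bound, so the dendrogram $T$ produced at line~12 agrees with $T'$ in the order in which nodes are separated. The main obstacle I anticipate is making ``similar'' precise. Because H-$\mathcal{K}C$ commits to two largest anchors and assigns the rest greedily, its partition need not coincide edge-for-edge with cutting the minimum-weight edge of the single-linkage tree, so the upper bound on $f(C_1,C_2)$ must be argued tight enough that no inversion of the merge/split order is introduced by the greedy anchoring. The crux is therefore bounding the gap between the greedy split value and the optimal single-linkage cut, and showing this gap does not reorder the hierarchy of cuts relative to the single-linkage hierarchy.
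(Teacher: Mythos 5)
Your overall strategy matches the paper's: invoke the cited AHC/DHC equivalence for a single-linkage criterion, concede that H-$\mathcal{K}C$'s anchored split is not the exact minimizer of $f$, and then argue ``similarity'' of the dendrograms via an upper bound on the cross-part value $f(C_1,C_2)$. However, your derivation of that upper bound contains a genuine gap. From the assignment rule you only get $\mathcal{K}(\mathcal{P}_G,\mathcal{P}_{\hat{G}_1})\ge\mathcal{K}(\mathcal{P}_G,\mathcal{P}_{\hat{G}_2})$ for $G\in C_1$ and the symmetric inequality for $G'\in C_2$; these inequalities compare each cluster's similarity \emph{to the two anchors}, and they say nothing about the pairwise similarity $\mathcal{K}(\mathcal{P}_G,\mathcal{P}_{G'})$ between two non-anchor clusters on opposite sides. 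A kernel has no triangle-type inequality that converts anchor preferences into a cap on cross-pair similarity: two core clusters can lie essentially on the ``decision boundary'' between $\hat{G}_1$ and $\hat{G}_2$, be assigned to different sides, and still be arbitrarily similar to each other. So the step ``the cross-part similarities are capped by the anchor-separating thresholds'' is a non sequitur, and the rest of your assembly (no inversion of the merge/split order) rests on it.

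The paper closes this hole from a different direction: the bound $f(X,Y)\le\tau$ does not come from the bisecting rule at line 7 at all, but from the way the core clusters are produced at line 1. There, $\mathcal{K}$-\texttt{psKC} grows each cluster only while unassigned points have similarity above the threshold $\tau$, so any two \emph{distinct} core clusters are pairwise dissimilar (similarity below $\tau$) by construction; consequently \emph{every} bipartition of the core clusters, including the greedy anchored one, satisfies $f(X,Y)\le\tau$. Note also that the paper is more modest in its conclusion than your final paragraph: it explicitly states that H-$\mathcal{K}C$ does \emph{not} reproduce the AHC/DHC single-linkage dendrogram exactly (its rule is not an assessment function), and claims only that the bounded cross-similarity makes the result ``similar.'' Your attempt to show the greedy split preserves the ordering of cuts is exactly the part that cannot be established from the split rule alone, and the paper does not attempt it.
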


\begin{proof}

    AHC and DHC create the same dendrogram \cite{LABBE2023555,gagolewski2024clustering} if they use the same single-linkage  function $f$ to assess the goodness of a merge and a split, respectively. That is, if AHC merges the two most similar nodes at each step according to  $f(X,Y)$, which have subsets $X$ and $Y$,  then a DHC algorithm produces the same dendrogram if every bisecting minimizes $f(X,Y)$. 

Let $f(X,Y)=\max_{C_i\in X,C_j\in Y}\mathcal{K}(\mathcal{P}_{C_i},\mathcal{P}_{C_j})$, where $\mathcal{K}$ is a distribution kernel.  The following AHC and DHC algorithms produce the same dendrogram:
\begin{enumerate}
    \item[*] \textit{AHC merges two nodes $X=\{C_{i_1},...,C_{i_m}\}$ and $Y=\{C_{j_1},...,C_{j_n}\}$ with the maximum $f(X,Y)$.}
    \item[*] \textit{DHC minimizes $f(X,Y)$ when it splits a node into two child nodes $X,Y$.}
\end{enumerate}

Note that the bisecting criterion of H-$\mathcal{K}C$, i.e., $\{G \in \hat{C} \mid \argmax_{i \in [1,2]} \mathcal{K}(\mathcal{P}_G, \mathcal{P}_{\hat{G}_i}) = j \}$, does not choose the best split among many possible splits, but it simply divides the set $\hat{C}$ into two subsets based on the two largest core clusters $\hat{G}_1,\hat{G}_2$. In other words, the criterion is not an assessment function. Hence, H-$\mathcal{K}C$ does not produce the same dendrogram as that produced by AHC or DHC with $f$.

Although  H-$\mathcal{K}C$ does not minimize $f$,  the bisecting criterion provides an upper-bound for $f$, i.e., $f(X,Y)\leq\tau$, where $\tau$ is the similarity threshold in $\mathcal{K}$-\texttt{psKC}, i.e. the default choice in the first line of H-$\mathcal{K}$C.

\end{proof}

\section{Data and Experiment Settings}
\subsection{Spatial Transcriptomics data}
\label{append: ST data}
The HER2 breast tumor dataset was downloaded from github: https://github.com/almaan/her2st. 
The dataset contains eight samples with pathologist-annotated labels, and we used the H1 sample to demonstrate the result.

Mouse hippocampus Slide-seq V2 data were downloaded from the Single Cell Portal SCP815 website: https://singlecell.broadinstitute.org/single\_cell/study/SCP815/sensitive-spatial-genome-wide-782. 

We used the file ``Puck\_200115\_08" in our study. The dataset contains approximately 23,000 genes and 53,000 spatial locations.

Begin with a spatial transcriptomics dataset, we normalized the raw molecular count matrix using the variance stabilizing transformation method called SCTransform \cite{choudhary2022comparison}. To select spatially variable genes, we apply SPARK \cite{sun2020statistical} in the HER2 breast tumor dataset and use SPARK-X \cite{zhu2021spark} in the mouse hippocampus Slide-seq V2 data, as conducted in previous studies. Then the cell spatial information  and the gene expression information are integrated to produce a graph. Then, a graph embedding scheme called the Weisfeiler-Lehman scheme \cite{shervashidze2011weisfeiler} converts a graph into a vector representation. The Weisfeiler–Lehman(WL) embedding used here is the
 same as that proposed by \cite{togninalli2019wasserstein}. For a graph $G=(V,E)$ with continuous attributes $a(v) \in \mathbb{R}^m$, the WL
 embedding at iteration $h>0$ is recursively defined as:

 \begin{equation*}
    a^{h+1}(v) = \frac{1}{2} \left( a^h(v) + \frac{1}{\text{deg}(v)} \sum_{u \in \mathcal{N}(v)} w((v, u)) \cdot a^h(u) \right).
\end{equation*}
When edge weights are not available, $w((u, v)$ is set to $1$. Using the recursive procedure described above, a WL-based graph embedding scheme that generates node embeddings from the attributes of the graphs can be proposed as
 \begin{equation*}
    \Phi(v) = \left[ a^0(v), \dots, a^h(v) \right]^\top.
\end{equation*}
Then, for a graph $G = (V, E)$, its mean embedding or its feature mean map is given as
\[
    \hat{\Phi}(G) = \frac{1}{|V|} \sum_{v \in V} \Phi(v) = \frac{1}{|V|} \sum_{v \in V} [a^0(v), \dots, a^h(v)]^\top.
\]

\subsection{Machine and hyperparameter settings}
The experiments are performed on a machine with 2.20GHz CPU and 16GB RAM. The search range of hyperparameter is provided in Table \ref{tab:parameters}. 
\begin{table}[h]
        \caption{Hyperparameter search range.}
    \label{tab:parameters}
    \centering
    \begin{tabular}{|c|c|} \hline
          & search range\\ \hline
      Isolation Kernel & $\psi\in\{4,6,8,16,24,32,48\}, t=200$\\ \hline
      \multirow{2}{*}{H-$\mathcal{K}C$}   & $\tau \in $ $[0.1, 0.5, 1,5,10,50,$\\&$100,500,1000]\times 10^{-4}$ $\rho=0.1$\\ \hline
      WL & $h=7$\\ \hline
      \multirow{2}{*}{SpaGCN} &  $histology\in\{true, false\}$\\
       & $ init \in \{``louvain", ``kmeans"\}$\\
      
      \hline
    \end{tabular}
\end{table}

\section{Additional Experiments}
\label{append: more exps}
\subsection{Ablation studies}
\label{append: ablation}

For the choice of distributional kernel, we consider recently introduced Isolation Distributional Kernel (IDK) \cite{ting2020isolation}, which origins from Isolation Kernel \cite{ting2018isolation} and has finite dimensional feature map.

In the first ablation study, we compare IDK with  Gaussian Distributional Kernel (GDK) in Figure \ref{fig: ablation}. This one needs to modify the affect lines 1, 7, 13 \& 17, where distributional kernel is used. In lines 1, 7, 13 \& 17, distribution kernel employs the same hyperparameter. As shown in Figure \ref{fig: ablation}. IDK successfully identifies three Gaussian distributions with varied densities and the clusters, in contrast to GDK which is unable to achieve this. GDK also can not correctly identify the two clusters on the bottom. In addition, after the first split, the leftmost Gaussian cluster is in the same subset with the L-shape cluster, instead of the middle Gaussian cluster.
\begin{figure}[h]
\centering
\begin{subfigure}
        {.45\linewidth}
      \centering   
\includegraphics[width=\linewidth]{pics_arxiv/obj2_HKC_0.98.png}
        \caption{IDK: $\wp=0.98$}
        \label{subfig: HKBC-IK}
    \end{subfigure} 
   \begin{subfigure}
        {.45\linewidth}
      \centering   
\includegraphics[width=\linewidth]{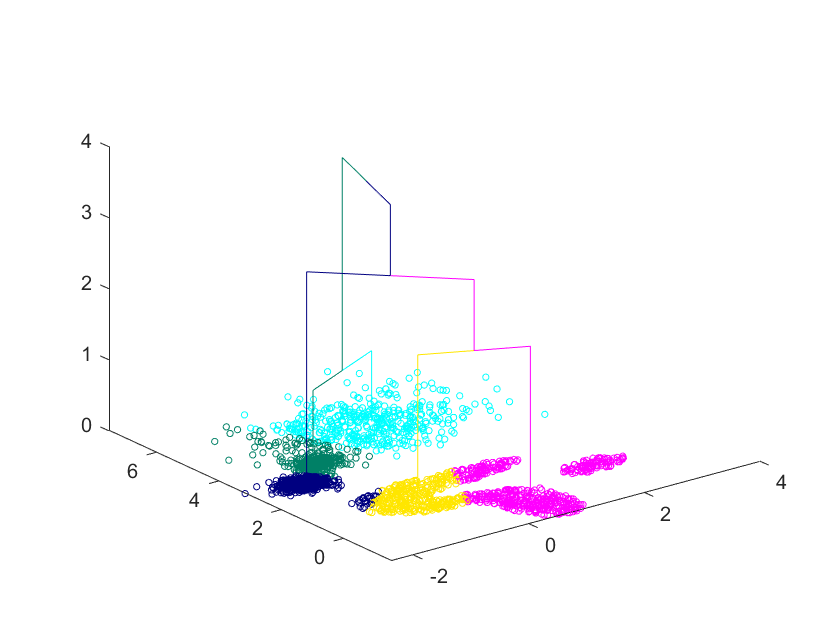}
        \caption{GDK: $\wp=0.86$}
        \label{subfig: HKBC-GK}
    \end{subfigure} 
    \caption{Results of IDK vs GDK in H-$\mathcal{K}C$ on the artificial dataset. $\wp$ is Dendrogram Purity (DP).}
\label{fig: ablation}
\end{figure}

The second ablation study examines the utility of the point-re-assignment (post-processing: line 14-18 in H-$\mathcal{K}C$).   The refinement or post-processing often provides tweaks at the edges of clusters. As a consequence, the results with and without post-processing are basically the same on the artificial dataset, as shown in Figure \ref{fig: ablation pp}.

\begin{figure}[h]
\centering
\begin{subfigure}
        {.45\linewidth}
      \centering   
\includegraphics[width=\linewidth]{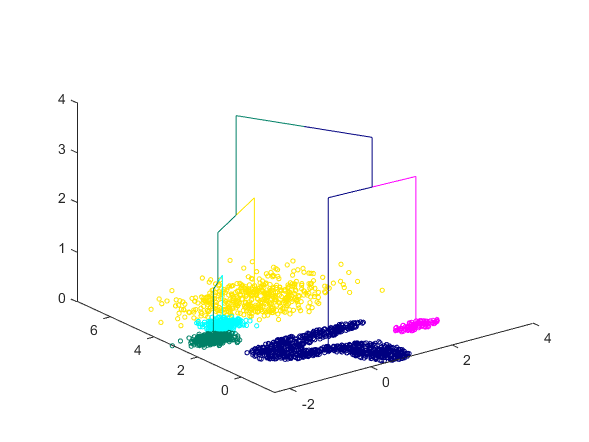}
        \caption{With post-processing:\\$\wp=0.98$}
        \label{subfig: HKBC-IK-refine}
    \end{subfigure} 
   \begin{subfigure}
        {.45\linewidth}
      \centering   
\includegraphics[width=\linewidth]{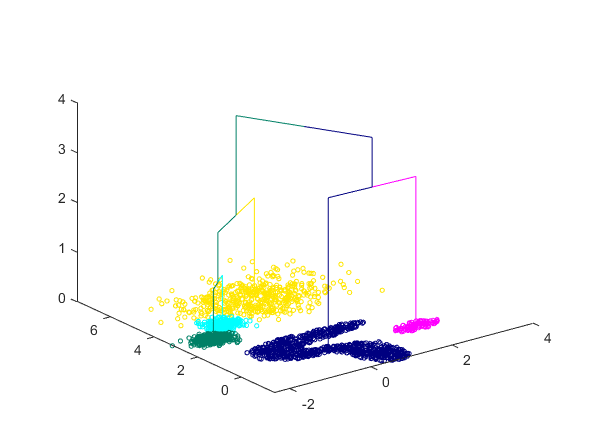}
        \caption{Without post-processing:\\$\wp=0.98$}
        \label{subfig: HKBC-GK-norefine}
    \end{subfigure} 
    \caption{Results of with and without post-processing.}
\label{fig: ablation pp}
\end{figure}

In the third ablation study, we consider three different clustering choices, $\mathcal{K}$-\texttt{psKC}, k-means \cite{kmeans} and DBSCAN \cite{DBSCAN_1996} in line 1 of H-$\mathcal{K}$C. The results are shown in Figure \ref{fig: cluster choice}. For k-means  and DBSCAN , we also consider their variants that uses Isolation Kernel (IK) \cite{ting2018isolation} as similarity measure. DBSCAN with IK can successfully identifies the clusters and produce the same dendrogram as that produced by $\mathcal{K}$-\texttt{psKC}.

In the last ablation study, we consider constructing the dendrogram with AHC. We replace line 4-12 of H-$\mathcal{K}C$ by using $h(X,Y)=\max_{C_i\in X,C_j\in Y}\mathcal{K}(\mathcal{P}_{C_i},\mathcal{P}_{C_j})$ for the merge of core clusters. The resultant dendrogram is exactly the same as that produced by H-$\mathcal{K}C$ in Figure \ref{subfig: HKBC-IK}. This result is consistent with our analysis in Proposition 1.

\begin{figure}[h]
\centering
     \begin{subfigure}
        {.3\linewidth}
      \centering   
\includegraphics[width=\linewidth]{pics_arxiv/obj2_HKC_0.98.png}
\caption{$\mathcal{K}$-\texttt{psKC}: $\wp=0.98$}
        \label{subfig: clsuter_pskc}
    \end{subfigure}
     \begin{subfigure}
        {.3\linewidth}
      \centering   
\includegraphics[width=\linewidth]{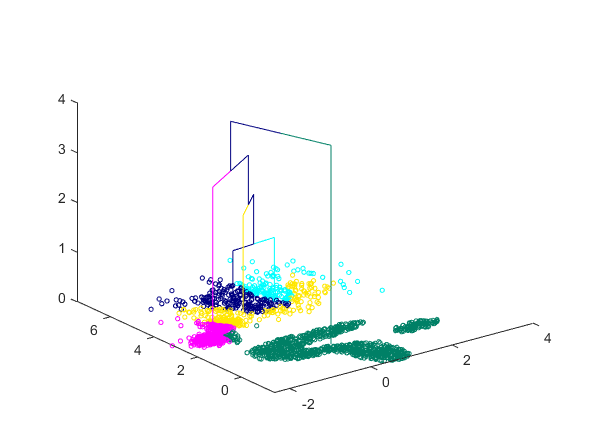}
        \caption{k-means: $\wp=0.76$}
        \label{subfig: cluster_kmeans}
    \end{subfigure}
\begin{subfigure}
        {.3\linewidth}
      \centering   
\includegraphics[width=\linewidth]{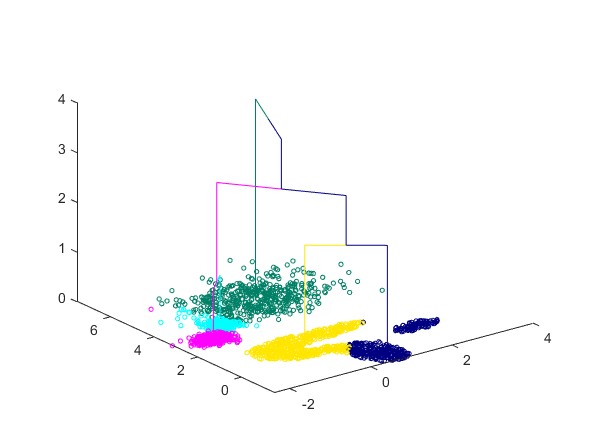}
        \caption{DBSCAN: $\wp=0.91$}
        \label{subfig: cluster_dbscan}
    \end{subfigure} \\
\begin{subfigure}
        {.3\linewidth}
      \centering   
\includegraphics[width=\linewidth]{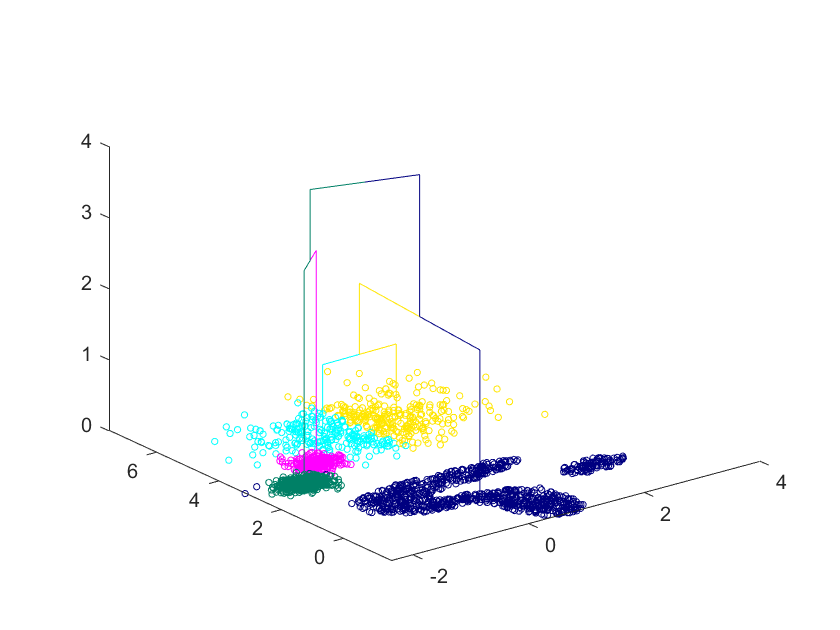}
        \caption{k-means (IK): $\wp=0.92$}
        \label{subfig: cluster_knkmeans}
    \end{subfigure} 
\begin{subfigure}
        {.3\linewidth}
      \centering   
\includegraphics[width=\linewidth]{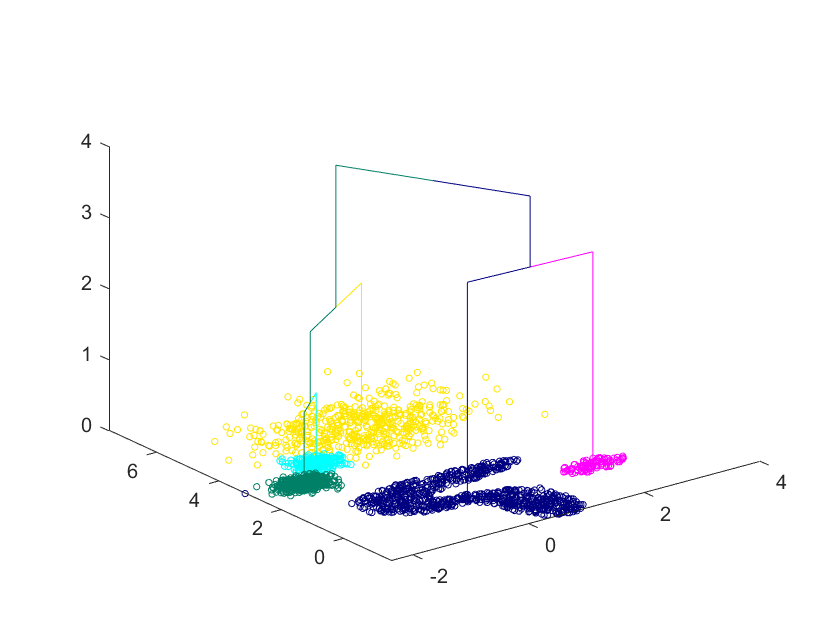}
        \caption{DBSCAN (IK): $\wp=0.98$}
        \label{subfig: cluster_kndbscan}
    \end{subfigure} 

    \caption{Results of different clustering method in line 1 of H-$\mathcal{K}$C.}
\label{fig: cluster choice}
\end{figure}

\subsection{SpecWRSC using psKC for initial clusters}
\label{appendix: SpecWRSC-psKC}

\begin{figure}[h]
      \centering   
\includegraphics[scale=0.3]{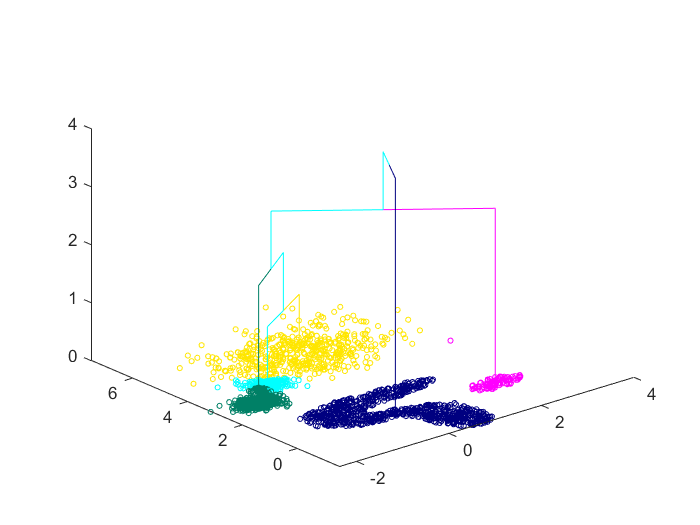}
        \caption{SpecWRSC using psKC for initial clusters: $\wp=0.95$}
        \label{fig: SpecWRSC-psKC}
\end{figure}

We present the results of SpecWRSC with psKC (instead of Spectral Clustering) as the initial clusters in Figure \ref{fig: SpecWRSC-psKC}.
\subsection{Scaleup test}

\label{append: scale}
The time complexity of H-$\mathcal{K}C$ is $\mathcal{O}(kn+s^2)$, where $n$ is the data size, $s$ is the data subset size and $k$ is the number of leaf nodes. 
Bisect-Kmeans has $\mathcal{O}(knR)$ time complexity, where $R$ is the number of repeated Kmeans.  SpecWRSC has time complexity of $G+\mathcal{O}(m\log^c n)$, where $G$ and $m$ are the time to build a graph and the number of edges in the graph, respectively. The time complexity of $G$ varies in the range $[\mathcal{O}(n), \mathcal{O}(n^2)]$, depending on the implementation. c is a constant greater than 1. The scaleup test results, shown in Figure \ref{fig: scaleup}, are consistent with their time
complexities.
\begin{figure}[h]
      \centering   
\includegraphics[scale=0.3]{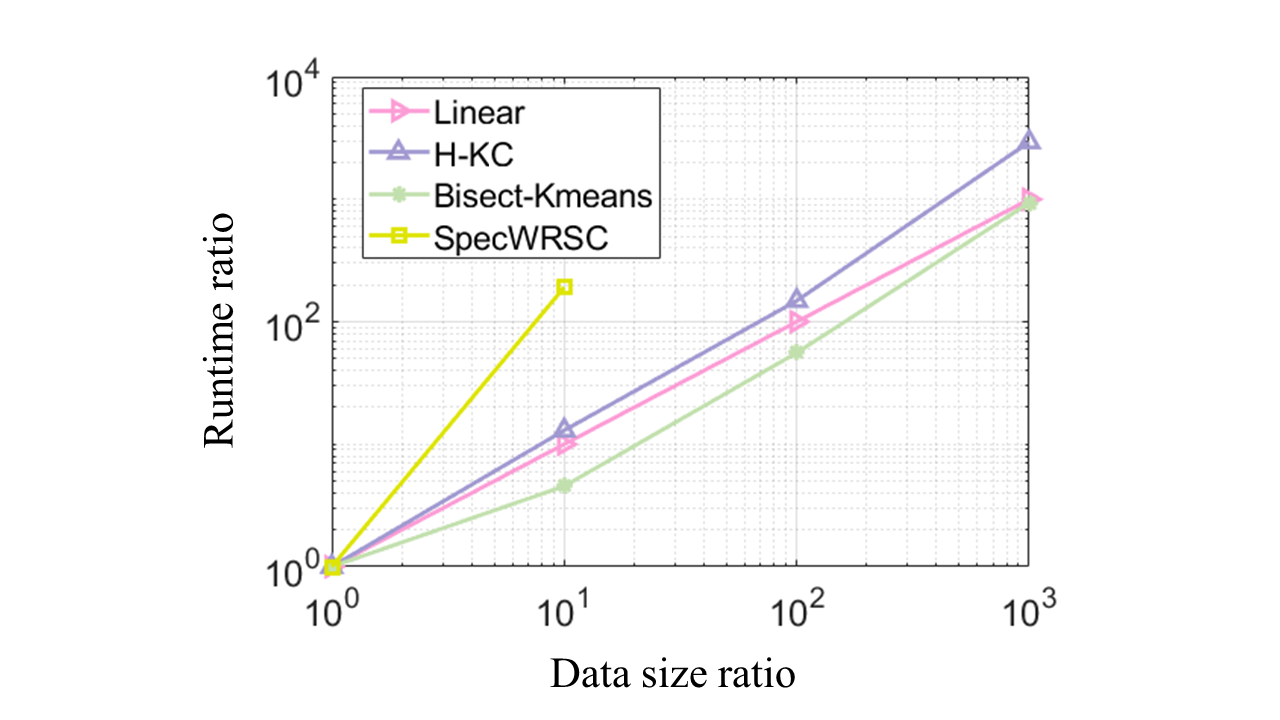}
        \caption{Scaleup test result on the same artificial dataset used in the main paper, where the data set size has 3000 points at data size ratio =1.}
        \label{fig: scaleup}
\end{figure}


\begin{figure}[h]
\centering
     \begin{subfigure}
        {.245\linewidth}
      \centering   
\includegraphics[width=\linewidth]{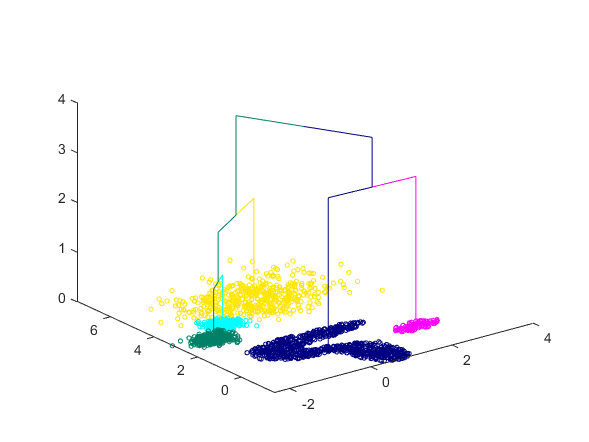}
\caption{$\tau=0.005$: $\wp=0.98$}
        \label{subfig: tau-1}
    \end{subfigure}
     \begin{subfigure}
        {.245\linewidth}
      \centering   
\includegraphics[width=\linewidth]{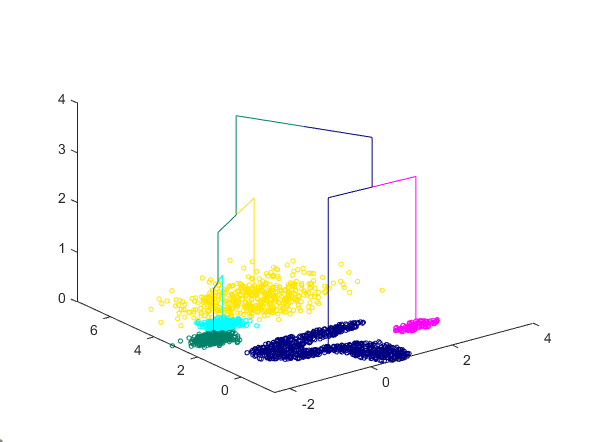}
        \caption{$\tau=0.01$: $\wp=0.98$}
        \label{subfig: tau-2}
    \end{subfigure}
\begin{subfigure}
        {.245\linewidth}
      \centering   
\includegraphics[width=\linewidth]{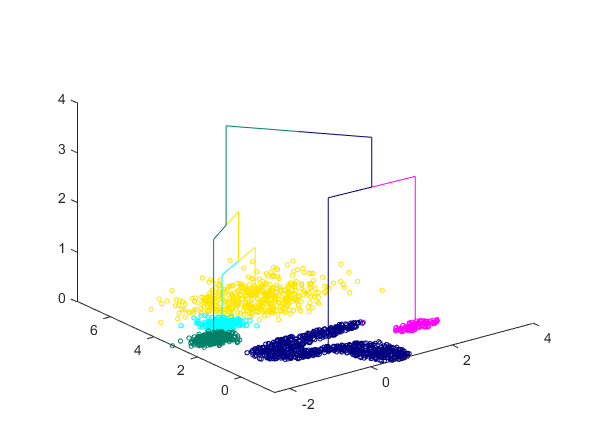}
        \caption{$\tau=0.015$: $\wp=0.98$}
        \label{subfig: tau-3}
    \end{subfigure} 
   \begin{subfigure}
        {.245\linewidth}
      \centering   
\includegraphics[width=\linewidth]{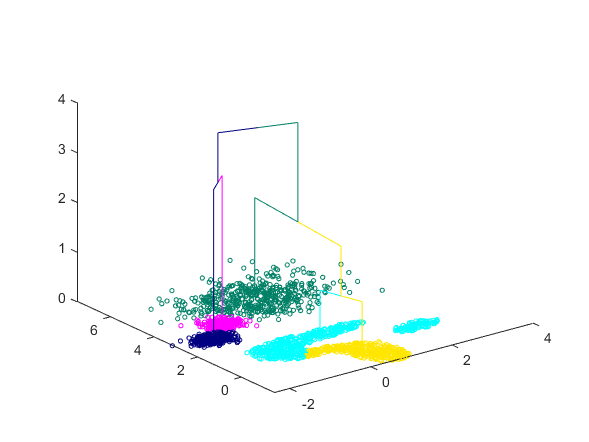}
        \caption{$\tau=0.02$: $\wp=0.92$}
        \label{subfig: tau-4}
    \end{subfigure} 
    \caption{Dendrograms of H-$\mathcal{K}C$ on the artificial dataset under different similarity thresholds $\tau$.}
\label{fig: sens tau}
\end{figure}

\begin{figure*}[h]
     \begin{subfigure}
        {.245\linewidth}
      \centering   
\includegraphics[width=\linewidth]{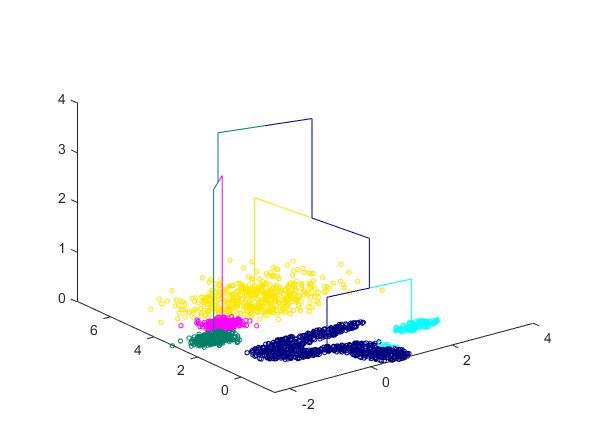}
        \caption{$s=1250$: $\wp=0.97$}
    \end{subfigure}
     \begin{subfigure}
        {.245\linewidth}
      \centering   
\includegraphics[width=\linewidth]{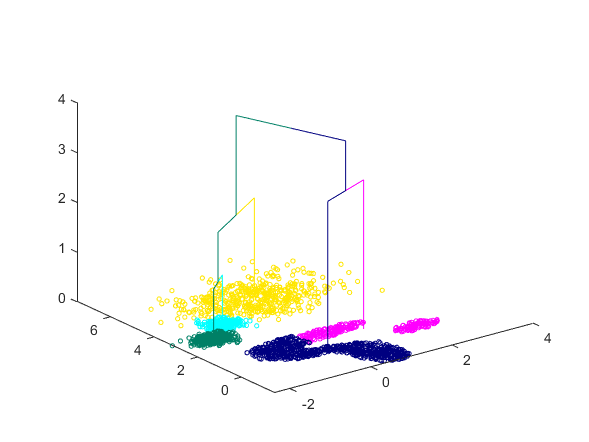}
        \caption{$s=1500$: $\wp=0.95$}
    \end{subfigure}
\begin{subfigure}
        {.245\linewidth}
      \centering   
\includegraphics[width=\linewidth]{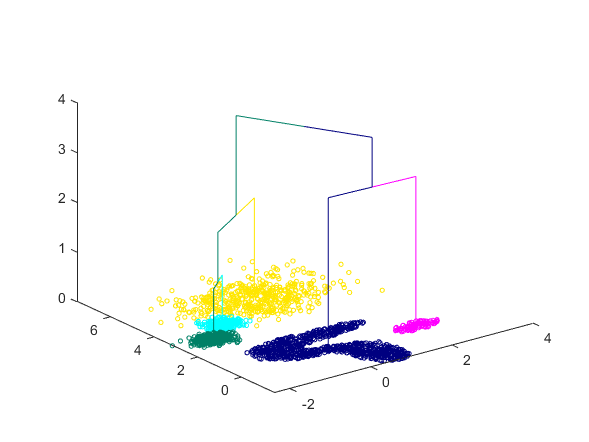}
        \caption{$s=1750$: $\wp=0.98$}
    \end{subfigure} 
   \begin{subfigure}
        {.245\linewidth}
      \centering   
\includegraphics[width=\linewidth]{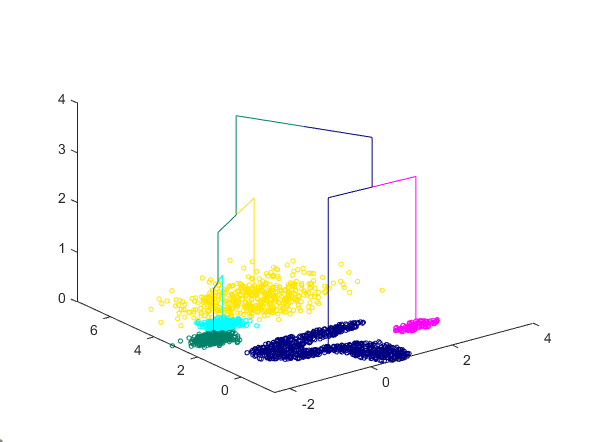}\caption{$s=2129$: $\wp=0.98$}
    \end{subfigure} 
    \caption{Dendrograms of H-$\mathcal{K}C$ on the artificial dataset under different data subset size $s$.} 
\label{fig: sens sample}
\end{figure*}

\begin{figure}[!h]
\centering
     \begin{subfigure}
        {.245\linewidth}
      \centering   
\includegraphics[width=\linewidth]{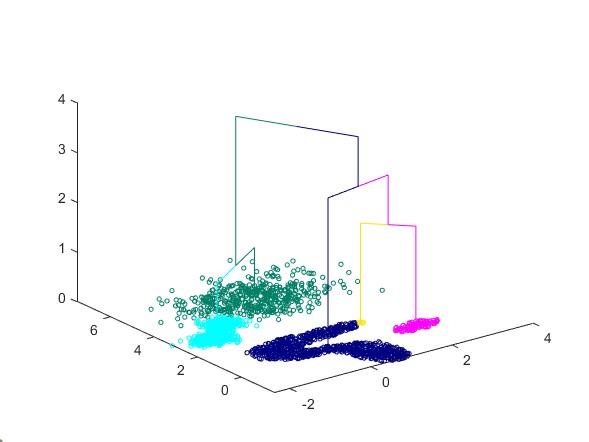}\caption{$\psi=40$: $\wp=0.85$}
    \end{subfigure}
     \begin{subfigure}
        {.245\linewidth}
      \centering   
\includegraphics[width=\linewidth]{pics_arxiv/0.980_psi60_tau0.01.png}
        \caption{$\psi=60$: $\wp=0.98$}
    \end{subfigure}
    \begin{subfigure}
        {.245\linewidth}
      \centering   
\includegraphics[width=\linewidth]{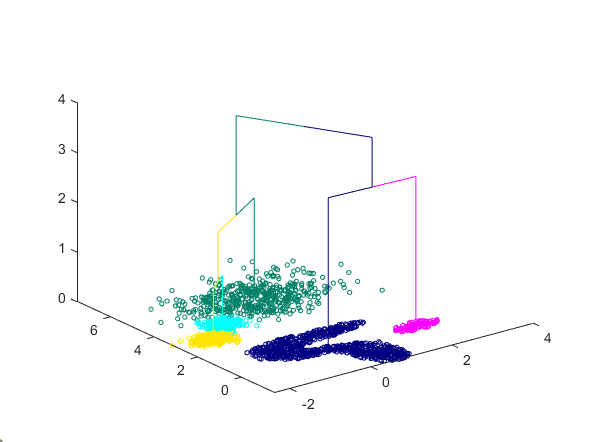}
        \caption{$\psi=80$: $\wp=0.98$}
    \end{subfigure}
    \begin{subfigure}
        {.245\linewidth}
      \centering   
\includegraphics[width=\linewidth]{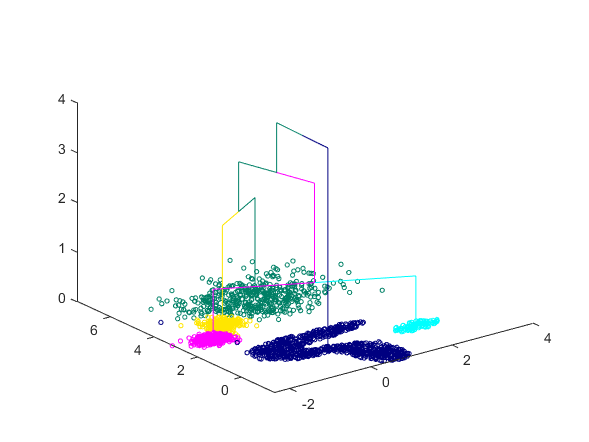}
        \caption{$\psi=100$: $\wp=0.97$}
    \end{subfigure}   \caption{Dendrograms of H-$\mathcal{K}C$ on the artificial dataset under different $\psi$ values.}
\label{fig: sens psi}
\end{figure}

\subsection{Flat clustering of SpaGCN on HER2}
\label{append: SpaGCN}
Here we use SpaGCN \cite{hu2021spagcn}, an end-to-end deep learning method, to perform flat clustering on the HER2 dataset, and it achieves NMI=0.45 and ARI=0.32. The clustering result is shown in Figure \ref{fig: SpaGCN}. The results of Bisect-Kmeans, SpecWRSC and H-$\mathcal{K}C$ are provided in Table \ref{tab: HER2 cluster}. Bisect-Kmeans is comparable to SpaGCN. H-$\mathcal{K}C$ outperforms the other methods. 

\begin{figure}[ht]
      \centering   
\includegraphics[scale=0.5]{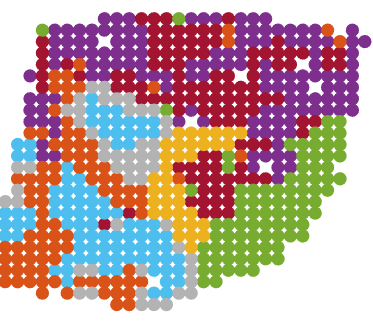}
        \caption{SpaGCN's clustering result on the HER2 dataset.}
        \label{fig: SpaGCN}
\end{figure}

\begin{table}[!ht]
    \centering
        \caption{Normalized Mutual Information (NMI) and Adjusted Rand Index (ARI) of different clustering algorithms. For DHC methods (Bisect-Kmeans, SpecWRSC and H-$\mathcal{K}C$), we use their leaf nodes as the final clustering result.}
    \label{tab: HER2 cluster}
    \begin{tabular}{|l|l|l|l|l|l|l|l|l|l|}
    \hline
        ~ & \multirow{2}{*}{SpaGCN} & Bisect- & \multirow{2}{*}{SpecWRSC} & \multirow{2}{*}{H-$\mathcal{K}C$} \\ 
        ~ & & Kmeans & &\\\hline
        NMI & 0.45 & 0.46 & 0.41 & 0.48 \\ \hline
        ARI & 0.32 & 0.39 & 0.26 & 0.43 \\ \hline
    \end{tabular}
\end{table}

\subsection{Sensitivity analysis}
Here we perform sensitivity analysis about the hyperparameters of H-$\mathcal{K}C$ and distributional kernel (Isolation Distributional Kernel) on the same artificial dataset. We adjust the hyperparameter within a small range near the optimal value $\tau=0.01,\psi=64$. 
\begin{itemize}
    \item For the similarity threshold $\tau$, it's more sensitive when $\tau$ is relatively large. It is robust in a small range around the optimal choice $\tau=0.1$, as shown in Figure \ref{fig: sens tau}.
    \item For the data subset size $s$, it is more robust when $s$ is large. When $s$ is small, the consequent core clusters are not representative enough to produce good clustering result, as shown in Figure \ref{fig: sens sample}.
    \item For the hyperparameter $\psi$ in Isolation Distributional Kernel (IDK), $\psi$ is a data-dependent hyperparameter, it is robust in a small range around the optimal choice $\psi=64$ (from 48 to 80), as shown in Figure \ref{fig: sens psi}.
\end{itemize}

\subsection{Comparison with more baselines}
\label{append: more baselines}

In addition to SpecWRSC and Bisect-Kmeans, we include comparison with more baselines (DHC methods DIANA \cite{DIANA-1990}, HDP \cite{zhu2022hierarchical} and AHC method SCC \cite{scc}) on additional 13 datasets. We report dendrogram purity in Table \ref{tab: baseline}. 


\begin{table}[H]
    \centering    \caption{Hierarchy clustering results in terms of dendrogram purity. \textcolor{blue}{NC} indicatesthat the run could not be completed within one day.}
    \label{tab: baseline}
    		\renewcommand{\arraystretch}{1}
		\setlength{\tabcolsep}{3.pt}

  \begin{tabular}{l|c|cccc|c}
    \hline 
     & \multicolumn{5}{c|}{DHC} & \multicolumn{1}{c}{AHC} \\ \cline{2-7}
     
     & \multirow{2}{*}{H-$\mathcal{K}C$} & Bisect-& \multirow{2}{*}{DIANA}& \multirow{2}{*}{SpecWRSC}& \multirow{2}{*}{HDP}& \multirow{2}{*}{SCC}\\ 
     Dataset   &     &Kmeans &              & & &               \\ \hline
     ALLAML    & .73 & .71 & .63            & .72 & .73 & .74 \\ \hline
     LSVT      & .74 & .64 & .59            &.70    & .65 &.66    \\
     Wine      & .95 & .84 & .42            &.95    & .83 &.95    \\
     Seeds     & .87 & .84 & .41            &.88   & .82 &.85    \\ 
     Musk      & .57 & .54 & .51            &.57    & .54 &.54    \\ 
     WDBC      & .90 & .88 & .58            &.88   & .84 &.92    \\ 
     LandCover & .55 & .53 & .22            &.49   & .43 &.60    \\ 
     Banknote  & .97 & .63 & .51            &.60    & .98 &.90    \\ 
     Spam      & .84 & .68 & .54            &.70   & .57 &.68    \\ 
     ImageNet-10 & .84 & .80 & .10            &.78   & .84 &.86    \\ 
     STL-10    & .63 & .59 & .10            & .50   & .59 &.59    \\ 
     CIFAR-10  & .66 & .64 & .10            &  \textcolor{blue}{NC}   & .64   &.63    \\ 
     MNIST     & .54 & \textcolor{blue}{NC} & .69 &   \textcolor{blue}{NC}      & \textcolor{blue}{NC}   &.39    \\ 
     Covertype & .49 & .46 &  \textcolor{blue}{NC}& \textcolor{blue}{NC} &\textcolor{blue}{NC} & .45  \\ \hline
     Avg.ranking & 1.50 & 3.43 & 5.43 & 3.36 & 3.5& 2.29\\
    \hline
\end{tabular}
\end{table}




\bibliographystyle{alpha}
\bibliography{references}

\end{document}